\def\BState{\State\hskip-\ALG@thistlm}
\def\thm@space@setup{\thm@preskip=2pt
        \thm@postskip=2pt \itshape}
\newtheoremstyle{newstyle}
{} 
{} 
{\mdseries} 
{} 
{\bfseries} 
{.} 
{ } 
{} 
\theoremstyle{newstyle}
\newtheorem{theorem}{Theorem}
\newtheorem{lemma}{Lemma}
\theoremstyle{definition}
\newtheorem{example}{Example}
\newtheorem{remark}{Remark}
\newtheorem{claim}{Claim}
\newcommand{\Expc}{\mathbb{E}}
\newcommand{\Prob}{\mathbb{P}}
\newcommand{\bA}{\mathbf{A}}
\newcommand{\ba}{\mathbf{a}}
\newcommand{\bbb}{\mathbf{b}}
\newcommand{\bbm}{\mathbf{m}}
\newcommand{\bB}{\mathbf{B}}
\newcommand{\bG}{\mathbf{G}}
\newcommand{\bg}{\mathbf{g}}
\newcommand{\bx}{\mathbf{x}}
\newcommand{\cD}{\mathcal{D}}
\newcommand{\cO}{\mathcal{O}}
\newcommand{\vv}{\mathbf{v}}
\newcommand{\saurav}[1]{{\color{blue}#1}}
 \definecolor{constructCluster}{HTML}{2B83BA}
\newcommand{\argmin}{\operatornamewithlimits{arg\,min}}
\begin{document}
\sloppy
\title{CodedReduce: A Fast and Robust Framework for Gradient Aggregation in Distributed Learning}
\author{Amirhossein Reisizadeh$^{*}$, Saurav Prakash$^{*}$, Ramtin Pedarsani, Amir Salman Avestimehr
\thanks{$^*$Authors have equal contribution.}
\thanks{Manuscript received August 25, 2020; revised April 11, 2021 and August 3, 2021; accepted August 8, 2021; approved by IEEE/ACM Transactions on Networking Editor R. La. }
\thanks{A. Reisizadeh and R. Pedarsani are with the Department
of Electrical and Computer Engineering, University of California, Santa Barbara, Santa Barbara, CA 93106 USA (e-mail: reisizadeh@ucsb.edu; ramtin@ece.ucsb.edu).}
\thanks{S. Prakash and A. S. Avestimehr are with the the Department
of Electrical and Computer Engineering, University of Southern California, Los Angeles, CA 90089 USA
(e-mail: sauravpr@usc.edu; avestimehr@ee.usc.edu).}
\thanks{We sincerely thank the editor and all the reviewers for their  valuable feedback and  detailed  comments. This work is supported by NSF grants CNS-2003035, CCF-1408639, CCF-1755808, NETS-1419632, Office of Naval Research (ONR) award N000141612189, NSA grant, a research gift from Intel and by Defense Advanced Research Projects Agency (DARPA) under Contract No. HR001117C0053. The views, opinions, and/or findings expressed are those of the author(s) and should not be interpreted as representing the official views or policies of the Department of Defense or the U.S. Government.}
\thanks{A part of this work was presented in IEEE International Symposium on
Information Theory, 2019 \cite{reisizadeh2019tree}.}
\thanks{1558-2566 © 2021 IEEE. Personal use is permitted, but republication/redistribution requires IEEE permission. For more information, see https://www.ieee.org/publications/rights/index.html }
}
\maketitle

\begin{abstract}
%
We focus on the commonly used synchronous Gradient Descent paradigm for large-scale distributed learning, for which there has been a growing interest to develop efficient and robust gradient aggregation strategies that overcome two key system bottlenecks: communication bandwidth and stragglers' delays. 
%
In particular, Ring-AllReduce (\textsf{RAR}) design has been proposed to avoid bandwidth bottleneck at any particular node by allowing each worker to only communicate with its neighbors that are arranged in a logical ring. 
On the other hand, Gradient Coding (\textsf{GC}) has been recently proposed to mitigate stragglers in a master-worker topology by allowing carefully designed redundant allocation of the data set to the workers. 
We propose a joint communication topology design and data set allocation strategy, named CodedReduce (\textsf{CR}), that combines the best of both \textsf{RAR} and \textsf{GC}. That is, it parallelizes the communications over a tree topology leading to efficient bandwidth utilization, and carefully designs a redundant data set allocation and coding strategy at the nodes to make the proposed gradient aggregation scheme robust to stragglers.
In particular, we quantify the communication parallelization gain and resiliency of the proposed \textsf{CR} scheme, and prove its optimality when the communication topology is a regular tree. 
%
%
Moreover, we characterize the expected run-time of \textsf{CR} and show order-wise speedups compared to the benchmark schemes.
%
%
Finally, we empirically evaluate the performance of our proposed \textsf{CR} design over Amazon EC2 and demonstrate that it achieves speedups of up to $27.2\times$ and $7.0\times$, respectively over the benchmarks \textsf{GC} and \textsf{RAR}.

\end{abstract}

\section{Introduction}
Modern machine learning algorithms are now used in a wide variety of domains. However, training a large-scale model  over a massive data set is an extremely computation and storage intensive task, e.g. training ResNet with more than 150 layers and hundreds of millions of parameters over the data set ImageNet with more than 14 million images. 
As a result, there has been significant interest in developing distributed learning strategies that speed up the training of learning models (e.g., \cite{dekel2012optimal,zinkevich2010parallelized,chen2016revisiting,recht2011hogwild,dean2012large,chilimbi2014project,abadi2016tensorflow}). 

In the commonly used Gradient Descent (GD) paradigm for learning, parallelization can be achieved by arranging the machines in a master-worker setup. Through a series of iterations, the master is responsible for updating the underlying model from the results received from the workers, where they compute the partial gradients using their local data batches and upload to the master at each iteration. For the master-worker setup, both synchronous  and asynchronous methods have been developed \cite{dekel2012optimal,zinkevich2010parallelized,chen2016revisiting,recht2011hogwild,dean2012large,chilimbi2014project}. In synchronous settings, all the workers wait for each other to complete the gradient computations, while in asynchronous methods, the workers continue the training process after their local gradient is computed. While synchronous approaches provide better generalization behaviors than the asynchronous ones \cite{cong2017efficient,chen2016revisiting}, they face major system bottlenecks due to (1) bandwidth congestion at the master due to concurrent communications from the workers to the master  \cite{patarasuk2009bandwidth}; and (2) the delays caused by slow workers or stragglers that significantly increase the run-time \cite{recht2011hogwild}. 

\begin{figure}[h!] 
    \centering
    \includegraphics[width=.43\textwidth]{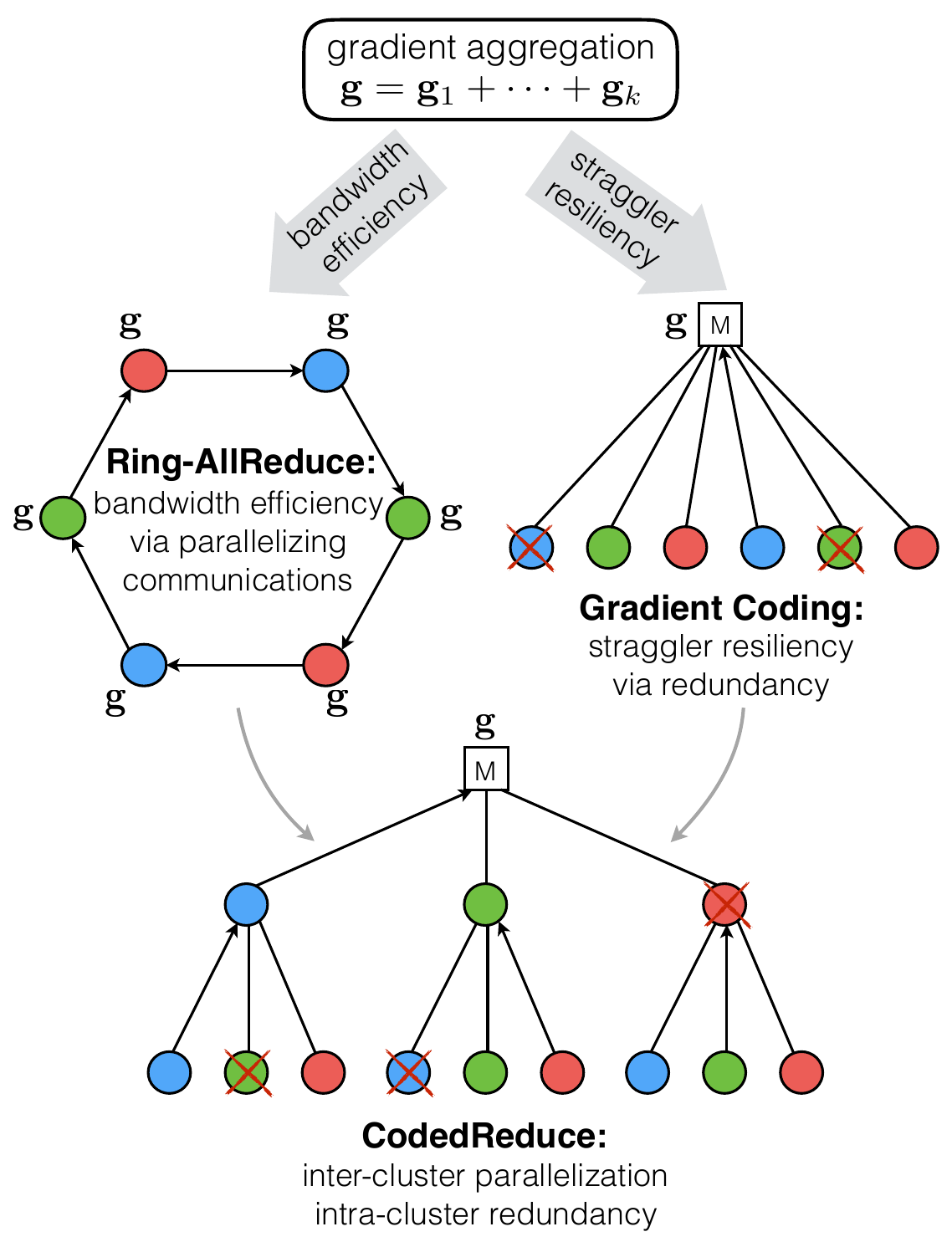}
    \caption{ Illustration of \textsf{RAR}, \textsf{GC} and \textsf{CR}:
    In \textsf{RAR}, workers communicate  only with their neighbors on a ring, which results in high bandwidth utilization; however, \textsf{RAR} is prone to stragglers. \textsf{GC} is robust to stragglers by doing redundant computations at workers; however, \textsf{GC} imposes bandwidth bottleneck at the master. \textsf{CR} achieves the benefits of both worlds, providing high bandwidth efficiency along with straggler resiliency.} 
    \label{fig:mainfig}
\end{figure}

To alleviate the communication bottleneck in distributed learning, various bandwidth efficient strategies have been proposed \cite{patarasuk2007bandwidth, thakur2005optimization,kandalla2010designing}. Particularly, Ring-AllReduce (\textsf{RAR})  \cite{patarasuk2009bandwidth} strategy has been proposed  by allowing each worker to only communicate with its neighbors that are arranged in a logical ring. More precisely, the data set, $\cD$, is uniformly distributed among $N$ workers and each node combines and passes its partial gradient along the ring such that at the end of the collective operation, each worker has a copy of the full gradient $\bg$ (Figure \ref{fig:mainfig}). Due to the master-less topology of  \textsf{RAR}, it avoids bandwidth bottleneck at any particular node.  Furthermore, as shown in \cite{patarasuk2007bandwidth}, \textsf{RAR} is provably bandwidth optimal and induces $\cO(1)$ communication overhead that does not depend on the number of distributed workers. 
As a result, \textsf{RAR} has recently become a central component in distributed deep learning for model updating \cite{gibiansky2017bringing,sergeev2018horovod,jin2016scale}. More recent approaches to mitigate bandwidth bottleneck in distributed gradient aggregation include compression and quantization of the gradients \cite{li2018pipe,yu2018gradiveq,sun2020lazily}.

Despite being bandwidth efficient, AllReduce-type algorithms are inherently sensitive to stragglers, which makes them prone to significant performance degradation and even complete failure if \textit{any} of the workers slows down. Straggler bottleneck becomes even more significant as the cluster size increases \cite{dean2013tail,zhao2017efficient}.


One approach to mitigate stragglers in distributed computation is to introduce computational redundancy via replication. \cite{zaharia2008improving} proposes to replicate the straggling task on other available nodes. In \cite{pu2015low}, the authors propose a partial data replication for robustness. Other relevant replication based strategies have been proposed in \cite{ananthanarayanan2013effective,wang2014efficient,shah2016redundant}. Recently, coding theoretic approaches have also been proposed for straggler mitigation \cite{lee2016speeding,dutta2016short,reisizadeh2017latency,yu2017polynomial,karakus2017straggler,ye2018communication,yu2018lagrange,narra2019slack,narra2020collage,yang2019timely,dhakal2019coded}. 
Specifically, Gradient Coding (\textsf{GC}) \cite{tandon2017gradient} has been proposed to alleviate stragglers in distributed gradient aggregation in a master-worker topology (Figure \ref{fig:mainfig}). In \textsf{GC}, the data set $\cD$ is carefully and redundantly distributed among the $N$ workers where each worker computes a \emph{coded} gradient from its local batch. The master node waits for the results of \emph{any} $N-S$ workers and recovers the total gradient $\bg$, where the design parameter $S$ denotes the maximum number of stragglers that can be tolerated. Therefore, \textsf{GC} prevents the master from waiting for \emph{all} the workers to finish their computations, and it was shown to achieve significant speedups over the classical uncoded master-worker setup \cite{tandon2017gradient}.


However, as the cluser size gets large, \textsf{GC} suffers from significant network congestion at the master. In particular, the communication overhead increases to  $\cO(N)$, as the master needs to receive messages from $\cO(N)$ workers.
Thus, it is essential to design distributed learning strategies that alleviate stragglers while imposing low communication overhead across the cluster. Consequently, our goal in this paper is to answer the following fundamental question:

\begin{tcolorbox}
\begin{center}
\emph{Can we achieve the communication parallelization of \textsf{RAR} and the straggler toleration of \textsf{GC} simultaneously in distributed gradient aggregation?}
\end{center}
\end{tcolorbox}
\noindent


We answer this question in the affirmative. As the main contribution of this paper, we propose a joint design of data allocation and communication strategy that is robust to stragglers, alongside being bandwidth efficient. Specifically, we propose a scalable and robust scheme for synchronous distributed gradient aggregation, called CodedReduce (\textsf{CR}).


There are two key ideas behind \textsf{CR}. Firstly, we use a logical tree topology for communication  consisting of a master node, $L$ layers of workers, where each \emph{parent} node has $n$ \emph{children} nodes (Figure \ref{fig:mainfig}). In the proposed configuration, each node communicates only with its parent node for \emph{downloading} the updated model and \emph{uploading} partial gradients. As in the classical master-worker setup, the root node (master) recovers the full gradient and updates the model. Except for the leaf nodes, each node receives \emph{enough} number of \textit{coded} partial gradients from its children, combines them with its local and partial gradient and uploads the result to its parent. This distributed communication strategy alleviates the communication bottleneck at the nodes, as  multiple parents can concurrently receive from their children. Secondly, the coding strategy utilized in \textsf{CR} provides robustness to stragglers. Towards this end, we exploit ideas from \textsf{GC} and propose a data allocation and communication strategy such that \emph{each} node needs to only wait for 
\emph{any} $n-s$ of its children to return their results. 

The theoretical guarantees of the proposed \textsf{CR} scheme are two-fold. First, we characterize the computation load introduced by the proposed \textsf{CR} and prove that for a fixed straggler resiliency, \textsf{CR} achieves the 
optimal \emph{computation load} (relative size of the assigned local data set to the total data set) among all the robust gradient aggregation schemes over a fixed tree topology. Moreover, \textsf{CR} significantly improves upon \textsf{GC} in the computation load  of the workers. More precisely,  to be robust to straggling/failure of $\alpha$ fraction of the children, \textsf{GC} loads each worker with $\approx \alpha$ fraction of the total data set, while \textsf{CR} assigns only $\approx \alpha^L$ fraction of the total data set, which is a major improvement. Secondly, we model the workers' computation times as shifted exponential random variables and asymptotically characterize the average latency of \textsf{CR}, that is the expected time to aggregate the gradient at the master node as the number of workers tends to infinity. This analysis further demonstrates how \textsf{CR} alleviates the bandwidth efficiency and speeds up the training process by parallelizing the communications via a tree. 


In addition to provable theoretical guarantees, the proposed \textsf{CR} scheme offers substantial improvements in practice. As a representative case, Figure \ref{fig:stats} provides the gradient aggregation time averaged over  many gradient descent iterations implemented over Amazon EC2 clusters. Compared to three benchmarks -- classical Uncoded Master-Worker (\textsf{UMW}), \textsf{GC}, \textsf{RAR}  -- the proposed \textsf{CR} scheme attains speedups of $22.5 \times$, $6.4 \times$ and  $4.3 \times$, respectively.

\begin{figure}[h!] 
    \centering
    \includegraphics[width=.47\textwidth]{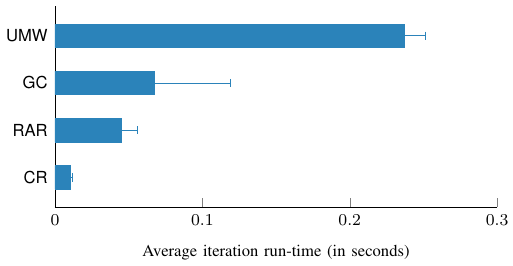}
    \caption{ Average iteration time for gradient aggregation in different schemes  \textsf{CR}, \textsf{RAR}, \textsf{GC} and \textsf{UMW}: 
Training a linear model is implemented on a cluster of $N=84$ \texttt{t2.micro} instances. 
} \label{fig:stats}
\end{figure}

\section{Problem Setup and Background}\label{sec:background}
In this section, we provide the problem setup followed by a brief background on \textsf{RAR} and \textsf{GC} and their corresponding straggler resiliency and communication parallelization.

\subsection{Problem Setting}
Many machine learning tasks involve fitting a model over a training data set by minimizing a loss function. For a given labeled data set $\cD = \{ \bx_j \in \mathbb{R}^{p+1}  : j=1,\cdots, d \}$, the goal is to solve the following optimization problem:
\begin{equation}\label{eq:minb}
    \theta^* = \argmin_{\theta \in \mathbb{R}^{p}} \sum_{\bx \in \cD} \ell \left( \theta; \bx \right) + \lambda R(\theta),
\end{equation}
where $\ell(\cdot)$ and $R(\cdot)$ respectively denote the loss and regularization functions, and the optimization problem is parameterized by $\lambda$. One of the most popular ways of solving \eqref{eq:minb} in distributed learning is to use the Gradient Descent (GD) algorithm. More specifically, under standard convexity assumptions, the following sequence of model updates $\{ \theta^{(t)} \}_{t=0}^{\infty}$ converges to the optimal solution $\theta^{*}$:
\begin{equation}\label{eq:gd}
    \theta^{(t+1)} = h_R \left( \theta^{(t)}, \bg \right),
\end{equation}
where $h_R(\cdot)$ is a gradient-based optimizer depending on the regularizer $R(\cdot)$ and 
\begin{equation}\label{eq:g}
    \bg = \sum_{\bx \in \cD} \nabla \ell \left(\theta^{(t)}; \bx \right),
\end{equation}
denotes the gradient of the loss function evaluated at the model at iteration $t$ over the data set ${\cD}$. Under certain assumptions, the iterations in (\ref{eq:gd}) converge to a local optimum in the non-convex case, as well. For instance, if all the saddle points of a smooth non-convex objective are strict-saddle, then the iterations in \eqref{eq:gd} converge to a local minimum \cite{lee2016gradient}. The core component of the iterations defined in (\ref{eq:gd}) is the computation of the gradient vector $\bg$ at each iteration. At scale, due to limited storage and computation capacity of the computing nodes, gradient aggregation task (\ref{eq:g}) has to be carried out over distributed nodes. This parallelization, as we discussed earlier, introduces two major bottlenecks: stragglers and bandwidth contention. 
The goal of the distributed gradient aggregation scheme is to provide straggler resiliency as well as communication parallelization. At a high level, straggler resiliency, $\alpha$, refers to the fraction of the straggling workers that the distributed aggregation scheme is robust to, and communication parallelization gain, $\beta$, quantifies the number of simultaneous communications in the network by distributed nodes compared to only one simultaneous communication in a single-node (master-worker) aggregation scheme.

Next, we discuss the data allocation and communication strategy of two synchronous gradient aggregation schemes in distributed learning and their corresponding straggler resiliency and communication parallelization gain.

\subsection{Ring-AllReduce}

In AllReduce-type aggregation schemes, the data set is uniformly distributed over $N$ worker nodes $\{W_1,\cdots,W_N\}$ which coordinate among themselves in a master-less setting to aggregate their partial gradients and compute the aggregate gradient $\bg$ at each worker. 
Particularly in \textsf{RAR}, each worker $W_i$ partitions its local partial gradient into $N$ segments  $\vv_{1,i},\cdots, \vv_{N,i}$. In the first round, $W_i$ transmits $\vv_{i,i}$ to $W_{i+1}$. Each worker then adds up the received segment to the corresponding segment of its local gradient, i.e., $W_i$ obtains $\vv_{i-1,i-1}+ \vv_{i-1,i}$. In the second round, the reduced segment is forwarded to the neighbor and added up to the corresponding segment. Proceeding similarly, at the end of $N-1$ rounds, each worker has a unique segment of the full gradient, i.e., $W_i$ has $\vv_{i+1,1}+\ldots+ \vv_{i+1,N}$. After the reduce-scatter phase, the workers execute the collective operation of AllGather where the full gradient $\bg$ becomes available at each node.  The \textsf{RAR} operation for a cluster of three workers is illustrated in Figure \ref{fig:Ring-AllReduce}. 

It is clear that \textsf{RAR} cannot tolerate \emph{any} straggling nodes since the communications are carried out over a ring and each node requires its neighbor's result to proceed in the ring, i.e., the straggler resiliency for \textsf{RAR} is $\alpha_{\textsf{RAR}}=0$. However,  the ring communication design in \textsf{RAR} alleviates the communication congestion at busy nodes, and achieves communication parallelization gain $\beta_{\textsf{RAR}}= \Theta (N)$ which is optimal \cite{patarasuk2009bandwidth}. 


\begin{figure}[h!] 
    \centering
    \includegraphics[width=0.5\textwidth]{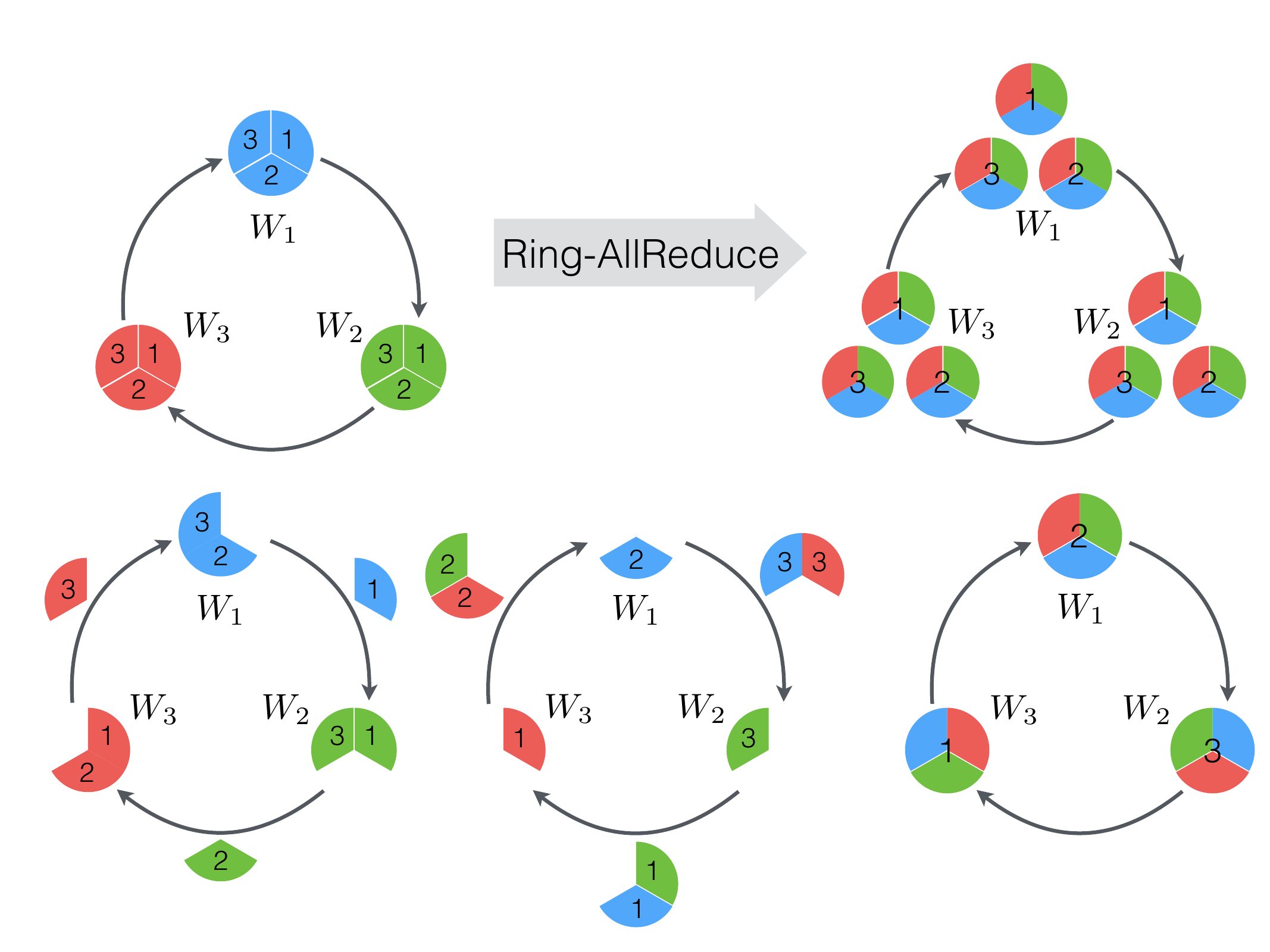}
    \caption{Illustration of communication strategy in \textsf{RAR} for $N=3$ workers.} \label{fig:Ring-AllReduce1}
    \label{fig:Ring-AllReduce}
\end{figure}

\subsection{Gradient Coding}
\label{subsec:GC}
Gradient Coding (\textsf{GC}) \cite{tandon2017gradient} was recently proposed to provide straggler resiliency in a master-worker topology with one master node and $N$ distributed worker nodes $\{W_1,\cdots,W_N\}$ as depicted in Figure \ref{fig:mainfig}. We start the description of 
\textsf{GC} with an illustrative example. 
\begin{example}[Gradient Coding]
To make gradient aggregation over $N=3$ workers robust to any $S=1$ straggler, \textsf{GC} partitions the data set to $\{ \cD_1, \cD_2, \cD_3 \}$ and assigns $2$ partitions to each worker as depicted in Figure \ref{fig:GC-example}. Full gradient $\bg=\bg_1+\bg_2+\bg_3$ can be recovered from any $N-S=2$ workers, e.g., the master recovers $\bg$ from  $W_1$ and $W_2$ by combining their results as $\bg = 2 \left( \frac{1}{2}\bg_1+\bg_2 \right) - \left( \bg_2 - \bg_3 \right) $.

\begin{figure}[h!] 
    \centering
    \includegraphics[width= 0.24\textwidth]{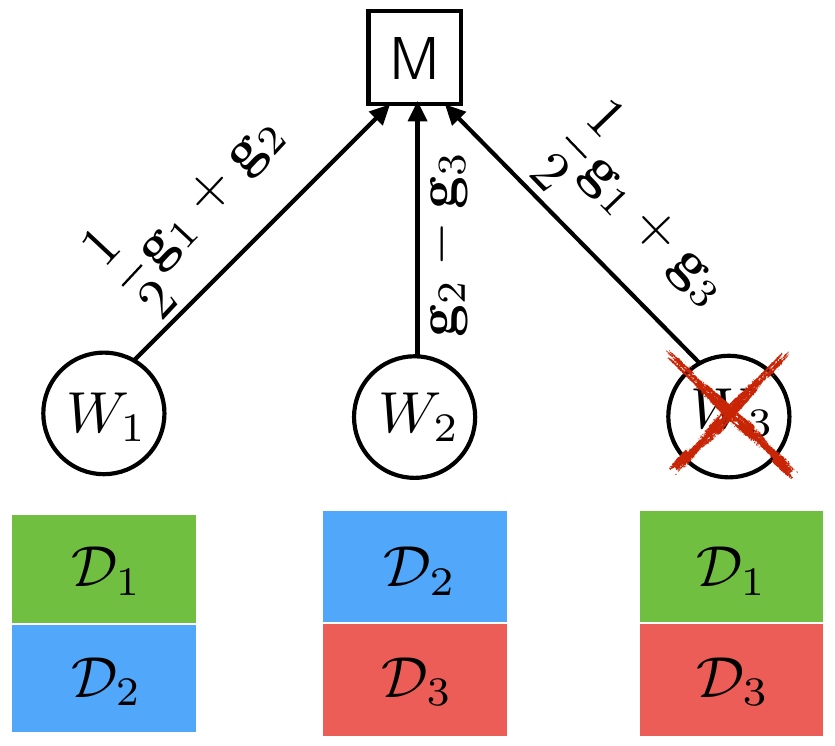}
    \caption{Illustration of data allocation and communication strategy in \textsf{GC} for $N=3$ workers.} \label{fig:GC-example}
\end{figure}

\end{example}

In general, to be robust to \emph{any} $S \in [N] = \{1,\cdots,N\}$ stragglers, \textsf{GC} uniformly partitions the data set $\cD$ to $\{ \cD_1,\cdots,\cD_k \}$ (e.g. $k=N$) with corresponding partial gradients $\bg_1, \cdots, \bg_k$ and distributes them redundantly among the workers such that each partition is placed in $S+1$ workers, thus achieving a computation load of $r_{\textsf{GC}}=\frac{S+1}{N}$. 
Let matrix $\bG = [\bg_1,\cdots,\bg_k]^{\top} \in \mathbb{R}^{k \times p}$ denote the collection of partial gradients. Each worker $W_i$ then computes its local partial gradients 
and sends $\bbb_i \bG$ to the master, where  $\bB = [\bbb_1;\cdots;\bbb_N] \in \mathbb{R}^{N \times k}$ denotes the encoding matrix, i.e. non-zero elements in $\bbb_i$ specifies the partitions stored in worker $W_i$.  Upon receiving the results of any $N-S$ workers, the master recovers the total gradient $\bg$ by linearly combining the received results, that is $\bg = \ba_f \bB \bG$
where the row vector $\ba_f \in \mathbb{R}^{1 \times N}$ corresponds to a particular set of $S$ stragglers and $\bA = [\ba_1;\cdots;\ba_F]$ denotes the decoding matrix with $F={N \choose S}$ distinct straggling scenarios. The \textsf{GC} algorithm designs encoding and decoding matrices $(\bB,\bA)$ such that, in the worst case, the full gradient $\bg$ is recoverable from the results of \emph{any} $N-S$ out of $N$ workers, i.e. straggler resiliency $\alpha_{\textsf{GC}}=S/N$ is attained. Although \textsf{GC} prevents the master to wait for \emph{all} the workers to finish their computations, it requires simultaneous communications from the workers that will cause congestion at the master node, and lead to  parallelization gain $\beta_{\textsf{GC}}= \Theta (1)$ for a constant resiliency.

Having reviewed \textsf{RAR} and \textsf{GC} strategies and their resiliency and parallelization properties, we now informally provide the guarantees of our proposed \textsf{CR} scheme in the following remark.
\begin{remark}
\textsf{CR} arranges the available $N$ workers via a tree configuration with $L$ layers of nodes and each parent having $n$ children, i.e. $N=n+\cdots+n^L$. The proposed data allocation and communication strategy in \textsf{CR} results in communication parallelization gain $\beta_{\textsf{CR}} = \Theta(N^{1-1/L})$ which approaches $\beta_{\textsf{RAR}} = \Theta(N)$ for large $L$. Moreover, given a computation load $0\leq r \leq 1$, \textsf{CR} is robust to straggling of  $\alpha_{\textsf{CR}} \approx r^{1/L}$ fraction of the children per any parent in the tree, while \textsf{GC} is robust to only  $\alpha_{\textsf{GC}} \approx r$ fraction of nodes and \textsf{RAR} has no straggler resiliency. 
Therefore, \textsf{CR} achieves the best of \textsf{RAR} and \textsf{GC}, simultaneously. Table \ref{tab:schemes} summarizes these results and Theorems \ref{thm:CRoptimality} and \ref{thm:CRtime}  formally characterize such guarantees.

\end{remark}

\begin{table}[t]
\caption{Communication parallelization gain and straggler resiliency of three designs \textsf{RAR}, \textsf{GC}, and \textsf{CR} in a system with $N$ nodes with computation load $r$, where \textsf{CR} has a tree communication topology of $L$ layers.}
\label{tab:schemes}
\begin{center}
\begin{small}
\begin{sc}
\begin{tabular}{lcc}
\toprule
 Scheme & \begin{tabular}{@{}c@{}@{}}Straggler   \\ Resiliency \\ ($\alpha$)\end{tabular} & \begin{tabular}{@{}c@{}@{}}Communication  \\ Parallelization Gain \\ ($\beta$)\end{tabular}  \\
\midrule
\textsf{RAR}    & $0$ & $\Theta(N)$ \\
\textsf{GC}     & $r$ & $\Theta(1)$\\
\textsf{CR}     & $r^{1/L}$ & $\Theta \left( N^{1-1/L} \right)$ \\
\bottomrule
\end{tabular}
\end{sc}
\end{small}
\end{center}
\vskip -0.1in
\end{table}

\section{Proposed CodedReduce Scheme}\label{sec:codedreduce}
In this section, we first present our proposed CodedReduce (\textsf{CR}) scheme by describing data set allocation and communication strategy at the nodes followed by an illustrative example. Then, we provide theoretical guarantees of \textsf{CR} and conclude the section with optimality of \textsf{CR}.

\subsection{Description of \textsf{CR} Scheme } \label{subsec:CRdescr}

 Let us start with the proposed network configuration. \textsf{CR} arranges the communication pattern among the nodes via a \emph{regular} tree structure as defined below.
An $(n,L)$--regular tree graph $T$ consists of a master node and $L$ layers of worker nodes. At any layer (except for the lowest), each \emph{parent} node is connected to $n$ \emph{children} nodes in the lower layer, i.e. there is a total of $N=n+\cdots+n^L$ nodes (See Figure \ref{fig:n-tree}). Each node of the tree is identified with a pair $(l,i)$, where $l \in [L]$ and $i \in [n^l]$ denote the corresponding layer and the node's index in that layer, respectively. Furthermore, $T(l,i)$ denotes the sub-tree with the root node $(l,i)$.
\begin{figure}[h!] 
    \centering
    \includegraphics[width=.35\textwidth]{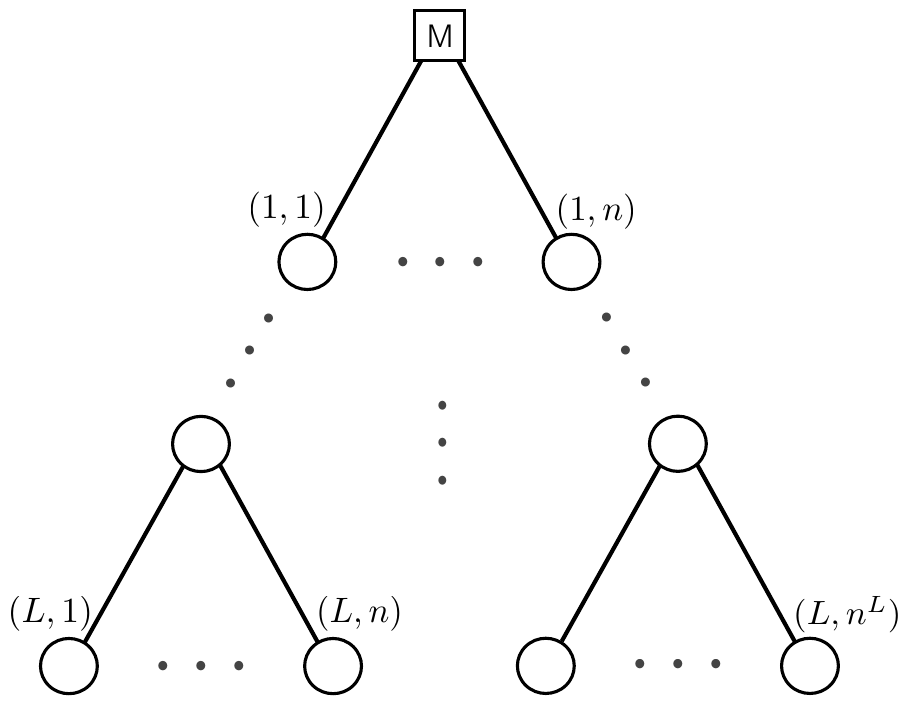}
    \caption{$(n,L)$--regular tree topology.} \label{fig:n-tree}
\end{figure}

We next introduce a notation that eases the algorithm description.
We associate a real scalar $b$ to all the data points in a generic data set $\cD$, denoting it by $b \cD$, and define the gradient over $b \cD$ as $\bg_{b \cD} = b \bg_{\cD} = b \sum_{\bx \in \cD} \nabla \ell (\theta^{(t)}; \bx ) $. As a building block of \textsf{CR}, we define the sub-routine \textsc{CompAlloc} in which given a generic data set $\cD$, $n$ workers are carefully assigned with data partitions and combining coefficients such that the full gradient over $\cD$ is retrievable from the computation results of any $n-s$ workers (Pseudo-code  in Appendix \ref{appA}).

\noindent
\textbf{\textsc{CompAlloc}}: For specified $n$ and $s$, \textsf{GC} (Algorithm 2 in \cite{tandon2017gradient}) constructs the encoding matrix $\bB = [\bbb_1;\cdots;\bbb_n] = [b_{i\kappa}]$. In \textsc{CompAlloc}, the input data set $\cD$ is partitioned to $\cD = \cup_{\kappa = 1}^{k} \cD_\kappa$ and distributed among the $n$ workers along with the corresponding coefficients. That is, each worker $i \in [n]$ is assigned with $\cD{(i)} = \cup_{\kappa = 1}^{k} b_{i\kappa} \cD_\kappa$ which specifies its local data set and corresponding combining coefficients. The parent of the $n$ workers is then able to recover the gradient over $\cD$, i.e. $\bg_{\cD}$ upon receiving the partial coded gradients of any $n-s$ workers and using the decoding matrix $\bA$ designed by  \textsf{GC} (Algorithm 1 in \cite{tandon2017gradient}). 



\noindent
\textbf{CodedReduce:} \textsf{CR} is implemented in two phases. It first allocates each worker with its local computation task via \textsc{CR.Allocate} procedure. This specifies each worker with its local data set and combining coefficients. Then, the communication strategy is determined by \textsc{CR.Execute}.

\begin{itemize}
    \item[] \textbf{\textsc{CR.Allocate}:}
    \begin{enumerate}
    \item Starting from the master, data set $  \cD^{T(1,i)}$ is assigned to sub-tree $T(1,i)$ for $i \in [n]$ via the allocation module \textsc{CompAlloc} (Figure \ref{fig:tree-data}).
    \item In layer $l=1$, each worker $(1,i)$, $i \in [n]$, picks $r_{\textsf{CR}}  d$ data points from the corresponding sub-tree's data set $\cD^{T(1,i)}$ as its local data set $\cD{(1,i)}$ and passes the rest $ \cD_{T(1,i)} = \cD^{T(1,i)} \setminus \cD(1,i)$ to its children and  their sub-trees (Figure \ref{fig:tree-data}). 
    \item Step (1) is repeated by using the module \textsc{CompAlloc} and treating $\cD_{T(1,i)}$ as the input data set to distribute it among the children of node $(1,i)$.
    \item Same procedure is applied till reaching the bottom layer (Figure \ref{fig:tree-data}). By doing so, the data set $\cD$ is redundantly distributed across the tree while all the workers are equally loaded with  $r_{\textsf{CR}}  d$ data points, where in Theorem \ref{thm:CRoptimality} we will show that $r_{\textsf{CR}} $ is a self-derived pick for \textsf{CR} given in (\ref{eq:rCR_lemma}).
    \end{enumerate}
    \item[] \textbf{\textsc{CR.Execute}:}
    \begin{enumerate}
        \item All the $N$ nodes start their local partial \emph{coded} gradient computations on the current model $\theta^{(t)}$, i.e.  $\bg_{\cD(l,i)}$ for all nodes $(l,i)$. Note that $\bg_{\cD(l,i)}$ is a coded gradient (i.e. a linear combination of partial gradients) since ${\cD(l,i)}$ carries combining coefficients along with its data points.
        \item Starting from the leaf nodes, they send their partial coded gradient computation results (messages) $\bbm_{(L,i)} = \bg_{\cD(L,i)}$ up to their parents.
        \item Upon receiving enough results from their children (any $n-s$ of them), workers in layer $L-1$ recover a linear combination of their children's messages via proper row in the decoding matrix $\bA$, e.g., parent node $(L-1,1)$ recovers from its children's messages $[ \bbm_{(L,1)};\cdots; \bbm_{(L,n)}]$ via the proper decoding row $\ba_{f(L-1,1)}$.
        \item Recovered partial gradient is added to the local partial coded gradient and is uploaded to the parent, e.g. node $(L-1,1)$  uploads $\bbm_{(L-1,1)}$ to its parent, where
        \begin{equation}
            \bbm_{(L-1,1)} = \ba_{f(L-1,1)} [ \bbm_{(L,1)};\cdots; \bbm_{(L,n)}] + \bg_{\cD(L-1,1)}. \nonumber
        \end{equation}
        \item The same procedure is repeated till reaching the master node which is able to aggregate the total gradient $\bg_{\cD}$.
    \end{enumerate}
\end{itemize}

\begin{figure}[t] 
    \centering
    \includegraphics[width=.45\textwidth]{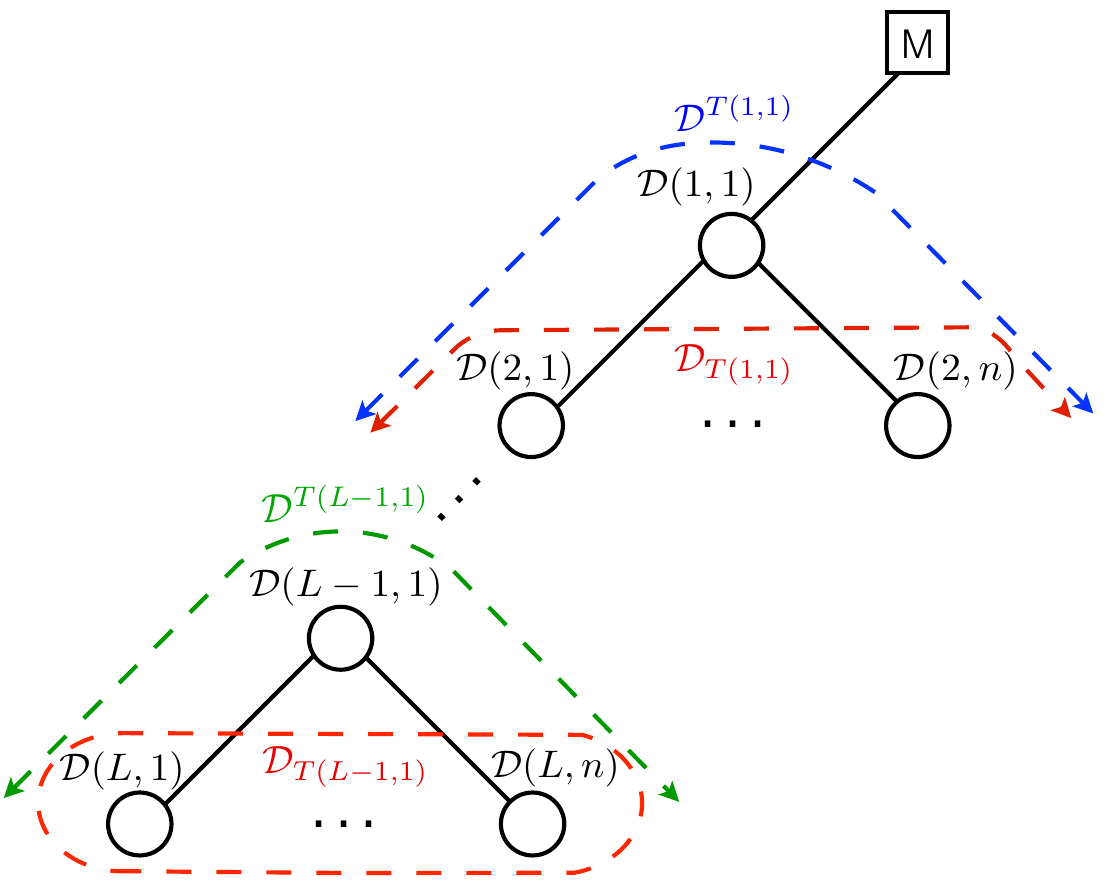}
    \caption{Illustration of  task allocation in \textsf{CR}.} \label{fig:tree-data}
\end{figure}

The pseudo-code for \textsf{CR} is available in Appendix \ref{appB}.


\subsection{An Example for \textsf{CR}}
In this section, we provide a simple example to better illustrate the proposed \textsf{CR} scheme.

\begin{example}[CodedReduce]
Consider a $(3,2)$--regular tree with $N=12$ nodes and $s=1$ straggler per parent. From \textsf{GC}, we have the decoding and encoding  matrices
\begin{equation}
\label{eq:gc3nodes}
    \bA = \begin{pmatrix}
        0 & 1 & 2 \\
        1 & 0 & 1 \\
        2 & -1 & 0
        \end{pmatrix},
\quad 
    \bB = \begin{pmatrix}
        1/2 & 1 & 0 \\
        0 & 1 & -1 \\
        1/2 & 0 & 1
        \end{pmatrix}.
\end{equation}

\begin{figure}[t] 
    \centering
    \includegraphics[width=.49\textwidth]{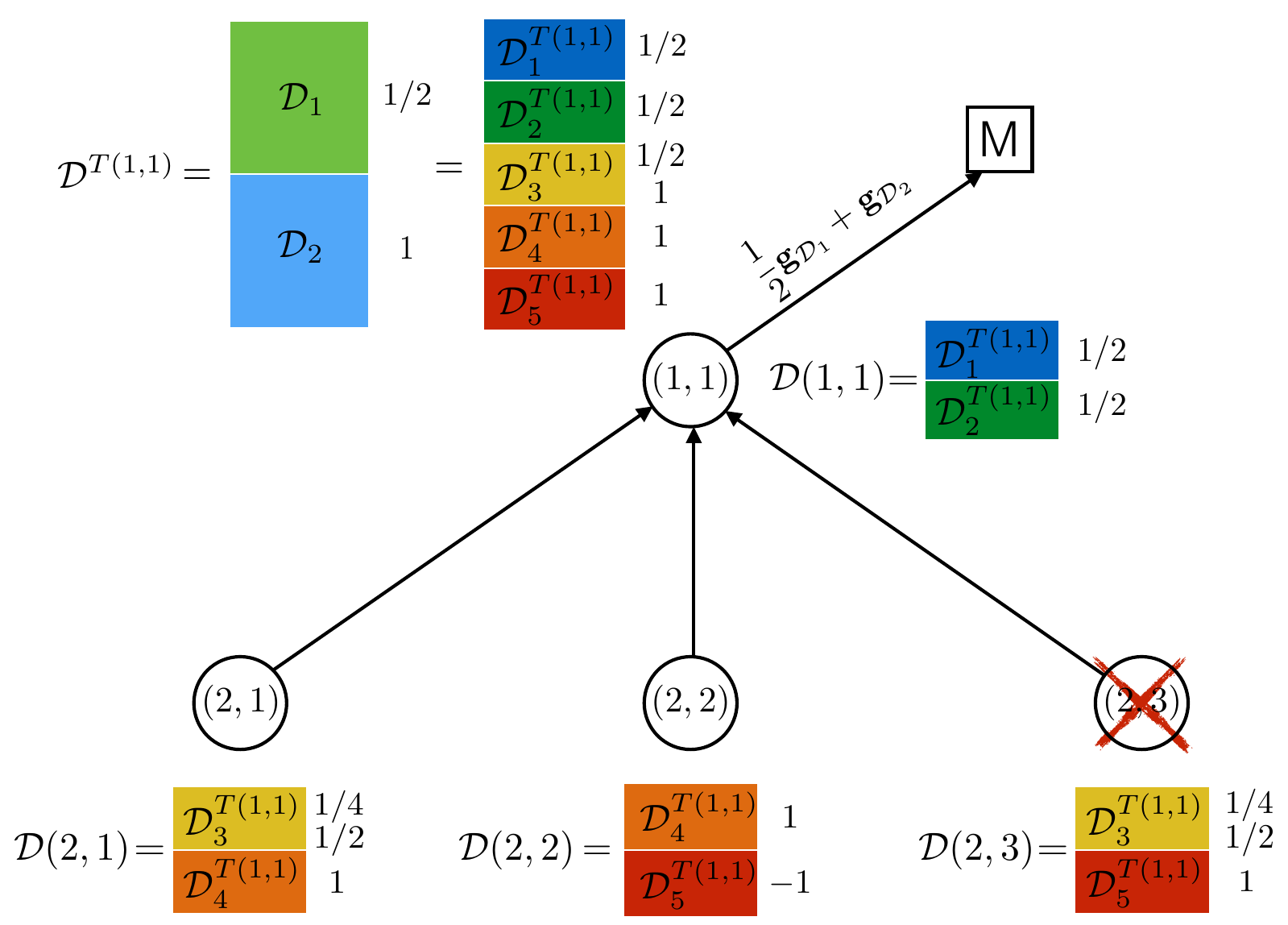}
    \caption{Illustration of data allocation and communication strategy in \textsf{CR} for a $(3,2)$--regular tree.} \label{fig:CR-ex}
\end{figure}

Following \textsf{CR}'s description,  we partition the data set of size $d$ as  $\cD=\{\cD_1,\cD_2,\cD_3\}$ and assign $\cD^{T(1,1)}=\frac{1}{2}\cD_1 \cup \cD_2$ to sub-tree $T(1,1)$. Node $(1,1)$ then picks $r_{\textsf{CR}} d= \frac{4}{15}d$ data points from $\cD^{T(1,1)}$ as $\cD{(1,1)}$. To do so, $\cD^{T(1,1)}$ is partitioned to $5$ sub-sets as $\cD^{T(1,1)} = \cD^{T(1,1)}_1 \cup \cdots \cup  \cD^{T(1,1)}_5$ and node $(1,1)$ picks the first two sub-sets, i.e. $\cD{(1,1)} = \cD^{T(1,1)}_1 \cup \cD^{T(1,1)}_2$ and the rest $\cD_{T(1,1)} = \cD^{T(1,1)}_3 \cup \cD^{T(1,1)}_4 \cup  \cD^{T(1,1)}_5$ is passed to layer $2$. Note that data points in $\cD{(1,1)}$ carry on the linear combination coefficients associated with $\cD^{T(1,1)}=\frac{1}{2}\cD_1 \cup \cD_2$. Figure \ref{fig:CR-ex} demonstrates each node  in sub-tree $T(1,1)$ with its allocated data set along with the encoding coefficients. Moving to layer $2$, $\cD_{T(1,1)}$ is partitioned to $3$ subsets and according to $\bB$ in (\ref{eq:gc3nodes}), the allocations to nodes $(2,1)$, $(2,2)$ and $(2,3)$ are as follows:
\begin{align}
    \cD{(2,1)} &= \frac{1}{2}\cD^{T(1,1)}_3 \cup \cD^{T(1,1)}_4, \nonumber\\
    \cD{(2,2)} &= \cD^{T(1,1)}_4 \cup (-1)\cD^{T(1,1)}_5, \nonumber\\
    \cD{(2,3)} &= \frac{1}{2}\cD^{T(1,1)}_3 \cup \cD^{T(1,1)}_5. \nonumber  
\end{align}
Similarly for other sub-trees, each node now is allocated with a data set for which each data point is associated with a scalar. For instance, node $(2,1)$ uploads $\bbm_{(2,1)} = \bg_{\cD(2,1)}=\frac{1}{2} \bg_{\cD^{T(1,1)}_3} + \bg_{\cD^{T(1,1)}_4}$
to its parent $(1,1)$. Node $(1,1)$ can recover from any $2$ surviving children, e.g. from $(2,1)$ and $(2,1)$ and using the first row in $\bA$, it uploads 
\begin{align}
    \bbm_{(1,1)} 
    &= [2,-1,0]  [\bbm_{(2,1)} ; \bbm_{(2,2)} ; \bbm_{(2,3)}] + \bg_{\cD(1,1)} \nonumber \\
    & = 2\bbm_{(2,1)} - \bbm_{(2,2)}  + \bg_{\cD(1,1)} \nonumber \\
    & = \frac{1}{2} \bg_{\cD_1} + \bg_{\cD_2} \nonumber
\end{align}
to the master. Similarly for other nodes, the master can recover the full gradient from any two children, e.g. using the second row of decoding matrix $\bA$ and surviving children $(1,1)$ and $(1,3)$:
\begin{align}
    [1,0,1] [\bbm_{(1,1)} ; & \bbm_{(1,2)}  ; \bbm_{(1,3)}] \nonumber \\
    &= \bbm_{(1,1)} + \bbm_{(1,3)} \nonumber \\
    & = \left(\frac{1}{2} \bg_{\cD_1} + \bg_{\cD_2} \right) + \left(\frac{1}{2} \bg_{\cD_1} + \bg_{\cD_3} \right) \nonumber \\
    & = \bg_{\cD}. \nonumber
\end{align}

\end{example}

\subsection{Theoretical Guarantees of \textsf{CR}}


In this section, we formally present the theoretical guarantees of \textsf{CR}. We first characterize the computation load induced by \textsf{CR} and demonstrate its significant improvement over \textsf{GC}. Then, we consider the commonly-used shifted exponential run-time computation distribution and a single-port communication model for workers  and asymptotically characterize the expected run-time of \textsf{CR} and conclude with  a discussion on its communication parallelization gain.


\textbf{Computation Load Optimality:} 
We show that for a fixed tree topology, the proposed \textsf{CR} is optimal in the sense that it achieves the minimum per-node computation load for a target resiliency. This optimality is established in two steps per  Theorem \ref{thm:CRoptimality}: (i) we first show the achievability by characterizing the computation load of \textsf{CR}; and (ii) we establish a converse showing that \textsf{CR}'s computation load is as small as possible. Proof is available in Appendix \ref{appC}.

\begin{theorem} \label{thm:CRoptimality}
For a fixed $(n,L)$--regular tree, any gradient aggregation scheme robust to any $s$ stragglers per any parent requires computation load $r$ where
\begin{equation}\label{eq:rCR_lemma}
    r \geq r_{\textsf{CR}} = \frac{1}{ \left( \frac{n}{s+1} \right) + \cdots + \left( \frac{n}{s+1} \right)^L }.
\end{equation}
\end{theorem}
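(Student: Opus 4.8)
The plan is to prove the two directions of Theorem~\ref{thm:CRoptimality} separately: an achievability part verifying that \textsf{CR} indeed loads each node with exactly $r_{\textsf{CR}}$ as given in \eqref{eq:rCR_lemma}, and a converse part showing that no robust scheme over the same tree can do better. For the achievability, I would set up a recursion on the ``data set mass'' flowing into each sub-tree. Let $\rho_l$ denote the fraction of the total data set that is allocated to a sub-tree rooted at layer $l$ (so the master distributes mass $\rho_1$ to each of its $n$ children's sub-trees, and $n\rho_1 = 1$). Inside a sub-tree rooted at $(l,i)$, the root keeps a fraction $r_{\textsf{CR}}$ (in absolute terms, $r_{\textsf{CR}}$ of the \emph{total} data set) for its own local batch, and the remainder $\rho_l - r_{\textsf{CR}}$ is the ``residual'' data set $\cD_{T(l,i)}$ that must be handed down. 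Because each parent needs to recover the full residual gradient from any $n-s$ of its $n$ children, the \textsc{CompAlloc}/\textsf{GC} module forces each child to store a $\frac{s+1}{n}$ fraction of that residual, i.e. $\rho_{l+1} = \frac{s+1}{n}(\rho_l - r_{\textsf{CR}})$. The boundary condition is that leaf nodes (layer $L$) have no children, so the entire mass reaching a leaf sub-tree is consumed locally: $\rho_L = r_{\textsf{CR}}$. Unrolling this linear recursion from $l=L$ back up to $l=1$ together with $n\rho_1 = 1$ yields a geometric sum in the ratio $\frac{n}{s+1}$, and solving for $r_{\textsf{CR}}$ gives exactly \eqref{eq:rCR_lemma}. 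This is essentially a bookkeeping computation and should go through cleanly.

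For the converse, I would argue that \emph{any} scheme robust to $s$ stragglers per parent must satisfy the same recursive inequality, so its per-node load can be no smaller. Fix any robust scheme and consider an arbitrary parent $(l,i)$ with children $(l+1, j_1),\dots,(l+1,j_n)$. The key observation is that the information the parent ultimately forwards upward about the sub-tree $T(l,i)$ must be reconstructible from the messages of any $n-s$ of its children plus the parent's own local computation; since in the worst case an adversary can make the same fixed $s$ children straggle across all descendants, the gradient contribution of the data ``below'' $(l,i)$ that lives strictly inside those $s$ sub-trees must be replicated among the surviving $n-s$ sibling sub-trees. A counting/cut-set argument then shows that the total data mass (counted with multiplicity, i.e. summing the loads over all nodes in $T(l+1, j_k)$) carried by each child sub-tree is at least $\frac{s+1}{n}$ times the mass that needs to be passed down from $(l,i)$ — this is precisely the information-theoretic lower bound underlying \textsf{GC}, applied layer by layer. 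Combined with the fact that every node carries load exactly $r$ in the optimization we are lower-bounding (or, more carefully, that some node carries load $\ge r$ in the max-load sense), the same unrolling as in the achievability direction produces $r \ge r_{\textsf{CR}}$.

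The main obstacle I anticipate is making the converse cut-set argument fully rigorous across multiple layers: one must be careful that the ``worst-case straggler pattern'' can be chosen consistently at every parent simultaneously (so that the bounds compose), and that the notion of ``data mass passed down'' is well-defined for an arbitrary scheme that might not partition data into disjoint labeled pieces the way \textsf{CR} does — it could, in principle, use arbitrary coded combinations or even share raw data points non-hierarchically. I would handle this by phrasing everything in terms of the dimension of the span of gradients each node can influence, reducing to the single-layer \textsf{GC} converse (which guarantees any $n-s$-out-of-$n$ recoverable scheme needs each worker to ``cover'' a $\frac{s+1}{n}$ fraction of the relevant gradient space) and then applying it inductively from the leaves upward, with the induction hypothesis being a lower bound on the residual mass that must reach layer $l$. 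The arithmetic of the geometric series and the final algebraic simplification to \eqref{eq:rCR_lemma} are routine once the recursion is established in both directions.
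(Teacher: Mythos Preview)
Your overall plan matches the paper's proof closely: an achievability recursion on the mass $\rho_l = |\cD^{T(l,i)}|/d$ together with a layer-by-layer converse showing each residual must be replicated with factor $(s+1)/n$. There is, however, one concrete slip you should fix. In the achievability you write ``$n\rho_1 = 1$''; this forgets the redundancy at the very top layer. The master applies \textsc{CompAlloc} to all of $\cD$, so each layer-$1$ sub-tree receives a $(s+1)/n$ fraction, i.e.\ $\rho_1 = (s+1)/n$, not $1/n$. With your stated boundary condition the unrolled recursion gives $r = 1/\big(n\sum_{j=0}^{L-1}(n/(s+1))^j\big)$, which already fails for $L=1$ (it yields $1/n$ instead of $(s+1)/n$). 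Once you correct this to $\rho_1=(s+1)/n$ (equivalently, treat the master as layer $0$ with $\rho_0=1$ and local load $0$, and apply your recursion once more), the geometric sum collapses exactly to \eqref{eq:rCR_lemma}.

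For the converse, your diagnosis of the difficulty is accurate and your plan is essentially what the paper does, though the paper executes it in a more elementary, data-point-centric way rather than via dimension-of-span language. Concretely, to show $\sum_j |\cD^{T(l+1,j)}| \ge (s+1)\big(|\cD^{T(l,i)}|-rd\big)$, the paper argues by contradiction: if some point $\bx \in \cD^{T(l,i)}\setminus \cD(l,i)$ lies in at most $s$ of the child sub-trees, then combine those $s$ child-level stragglers with the (at most $s$) layer-$l$ sub-trees other than $T(l,i)$ that contain $\bx$; this is a legal straggler pattern under which $\bx$ vanishes entirely, contradicting robustness. This is exactly the ``consistent multi-layer adversary'' you flagged, and it avoids any need to reason about coded combinations or gradient spans --- the bound is purely on raw data placement. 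The bookkeeping from the leaves upward is then identical to your plan. So apart from the $\rho_1$ boundary condition, your proposal is correct and aligned with the paper.
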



\begin{remark}
While \textsf{CR} is $\alpha$-resilient, i.e. robust to \emph{any} $s=\alpha n$ stragglers per \emph{any} parent node, it significantly improves the per-node computation (and storage) load compared to an equivalent \textsf{GC} scheme with the same resiliency. In particular, \textsf{GC} loads each worker with $r_{\textsf{GC}} = \frac{S+1}{N} = \frac{\alpha N + 1}{N} \approx \alpha$ fraction of the data set, while \textsf{CR} considerably reduces it to 
$r_{\textsf{CR}} = 1 / \sum_{l=1}^L \left( \frac{n}{ \alpha n +1} \right)^l \approx \alpha^L$. For $\alpha=0.5$ as an instance, \textsf{CR} reduces the computation load $7 \times$ by rearranging the nodes from $1$ layer to $3$ layers.
\end{remark}

\begin{remark}
\textsf{CR} makes the distributed GD strategy $\alpha$-resilient, that is any $s=\alpha n$ stragglers per any parent node which sums up to a total of $S= \alpha N$ stragglers -- the same as the worst case number of stragglers in \textsf{GC}. It is clear than if the stragglers are picked adversarially, for instance all the nodes in layer $1$, then \textsf{CR} fails to recover the total gradient at the master. However, our experiments over Amazon EC2 confirm that stragglers are randomly distributed over the tree and not adversarially picked, which is aligned with the random stragglers pattern considered in this paper.
\end{remark}

\textbf{Total Gradient Computation Complexity:} To better characterize the advantages of \textsf{CR}, we characterize its \emph{total} gradient computation complexity in order to reach the final parameter model with predefined accuracy. More precisely, we focus on learning problems with strongly convex losses and let $T_{\text{\textsf{CR}}}$ denote the total number of iterations to reach a final model $\theta$ such that $\Vert \theta - \theta^* \Vert^2 \leq \epsilon$. Since in each iteration of \textsf{CR} the \emph{exact} gradient on all the $d$ data samples is computed (same as in GD), therefore $T_{\text{\textsf{CR}}} = \mathcal{O}(\log(1/\epsilon))$. In each iteration, each of the $N$ worker nodes compute $\alpha_{\text{\textsf{CR}}} \cdot d$ gradients, where according to Theorem \ref{thm:CRoptimality}, we have $\alpha_{\text{\textsf{CR}}} \approx \alpha^L$. All in all, in order to reach an $\epsilon$-accurate model, the \textsf{CR} method requires $\mathcal{O}(\alpha^L \cdot N \cdot \log(1/\epsilon) \cdot d)$ gradient computations in total.

One simple and yet naive approach to mitigate stragglers is to update the model using the gradient computation results of \emph{only} a fraction ($\alpha$) of worker node (non-stragglers). This approach can be treated as standard Stochastic Gradient Descent (SGD) which requires $T_{\text{SGD}} = \mathcal{O}(1/\epsilon)$ iterations in total to reach an $\epsilon$-accurate model.  Since each of the $N$ worker nodes store $d/N$ samples (i.e. no redundant data allocation), therefore  in each iteration, each node computes $\alpha d /N$ gradients. Putting all together, in order to reach $\epsilon$-accurate model, SGD requires $\mathcal{O}(\alpha  \cdot 1/\epsilon \cdot d)$ gradient computations in total. Comparing the two gradient computation complexities of \textsf{CR} and SGD, we observe that although SGD slashes the complexity by a \emph{linear} factor $N$, however, it suffer from two \emph{exponential} factors, that are growing $\alpha^L$ to $\alpha$ and $\log(1/\epsilon)$ to $1/\epsilon$ which significantly increase the total gradient computation complexities, as $\alpha^L \ll \alpha$ and $\log(1/\epsilon) \ll 1/\epsilon$.

\textbf{Latency Performance:} 
While we have derived the straggler resiliency of \textsf{CR}, the ultimate goal of a distributed gradient aggregation scheme is to have small latency which is partly attained by establishing higher communication parallelization. 

\textit{Computation Time Model}: We consider random computation time model for workers with shifted exponential distribution which is used in several prior works \cite{liang2014tofec,reisizadeh2017coded,li2018near}. More precisely, for a worker $W_i$ with assigned data set of size $d_i$, we model the computation time as a random variable with a shifted exponential distribution as follows:
\begin{equation}
\label{eq:comp_model}
\Prob[T_i\leq t]=1-e^{-\frac{\mu}{d_i}(t-a d_i)}, \text{ for }\,\,t\geq a d_i,
\end{equation}
where system parameters $a = \Theta(1)$ and $\mu = \Theta(1)$ respectively denote the shift and the exponential rate. We assume that $T_i$'s are independent. 

\textit{Communication Time Model}: To model the communication time and bandwidth bottleneck, we assume that each node is able to receive messages from only one other node at a time, and the total available bandwidth is dedicated to the communicating node. We also assume that communicating a partial gradient vector (of size $p$) from a child to its parent takes a constant time $t_c$.

The following theorem asymptotically characterizes the expected run-time of \textsf{CR} which we denote by $T_{\textsf{CR}}$ (Proof is available in Appendix \ref{appD}). More precisely, we consider the regime of interest where the data set size $d$ and the number of layers $L$ in the tree are fixed, while the number of children per parent, i.e. $n$ is approaching infinity with a constant straggler ratio $\alpha = s/n = \Theta(1)$.

\begin{theorem} \label{thm:CRtime}
Considering the computation time model in (\ref{eq:comp_model}) for workers, the expected run-time of \textsf{CR} on an $(n,L)$--regular tree with resiliency $\alpha = \Theta(1)$ satisfies the followings:
\begin{align}
    \Expc \left[T_{\textsf{CR}} \right] 
    &\geq
    \frac{r_{\textsf{CR}} d}{\mu} \log \left( \frac{1}{\alpha} \right) + a r_{\textsf{CR}} d \nonumber\\
    &  
    + \left( n(1-\alpha)-o(n)+L-1 \right) \left( 1-o(1) \right) t_c + o(1), \label{eq:lowerb} \\
    \Expc \left[T_{\textsf{CR}} \right] 
    &\leq 
    \frac{r_{\textsf{CR}} d}{\mu} \log \left( \frac{1}{\alpha} \right) + a r_{\textsf{CR}} d 
    \nonumber\\
    & \quad
    +  n \left( 1-o(1) \right) L t_c + o(1) \label{eq:upperb}. 
\end{align}
\end{theorem}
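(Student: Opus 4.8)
The plan is to split $T_{\textsf{CR}}$ into a \emph{computation} part, captured by an order statistic of the shifted‑exponential law, and a \emph{communication} part, dictated by the tree, controlling the coupling by concentration. First I would set up a recursion over layers: for $l \in [L]$ let $\rho_l$ be the random time, from the start of the protocol, at which a generic layer‑$l$ node is ready to upload its coded message to its parent. A leaf has $\rho_L = T$, its computation time from \eqref{eq:comp_model} with $d_i = r_{\textsf{CR}} d$. A node at layer $l < L$ becomes ready once it has both finished its own local computation (of order $\Theta(r_{\textsf{CR}} d)$) and received $n - s$ of its $n$ children's messages; since it listens to only one child at a time, the latter occurs no earlier than the $(n-s)$‑th order statistic of $n$ i.i.d.\ copies of $\rho_{l+1}$ and no later than that order statistic plus $(n-s)t_c$, and an extra $t_c$ covers the upload. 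Then $T_{\textsf{CR}}$ is the ready time of the master, which waits for $n-s$ layer‑$1$ messages; unrolling exposes it as one leaf‑level $(n-s)$‑th order statistic plus $O(L)$ layers of sequential receptions of at most $n-s$ messages each.

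Next I would evaluate the order statistic: writing $T = a r_{\textsf{CR}} d + E$ with $E \sim \mathrm{Exp}(\mu/(r_{\textsf{CR}} d))$ and using the R\'enyi representation of exponential order statistics, the $(n-s)$‑th order statistic of $n$ i.i.d.\ copies of $T$ has mean $a r_{\textsf{CR}} d + \frac{r_{\textsf{CR}} d}{\mu}\sum_{i=s+1}^{n}\frac1i = a r_{\textsf{CR}} d + \frac{r_{\textsf{CR}} d}{\mu}(H_n - H_s)$, and $H_n - H_s = \log(1/\alpha) + o(1)$ since $s = \alpha n$ with $\alpha = \Theta(1)$; this is the source of the $\frac{r_{\textsf{CR}} d}{\mu}\log(1/\alpha) + a r_{\textsf{CR}} d$ terms in both bounds. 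Since this order statistic is a sum of independent exponentials of total variance $\Theta(1/n)$, a Bernstein‑type tail bound keeps it within $o(1)$ of its mean with probability $1 - n^{-\omega(1)}$; a union bound over the $\Theta(n^{L-1})$ internal nodes (here it is essential that $L$ is fixed) puts us on a high‑probability event on which \emph{every} node's $(n-s)$‑th child ready time equals $a r_{\textsf{CR}} d + \frac{r_{\textsf{CR}} d}{\mu}\log(1/\alpha)$ up to $o(1)$ and every computation time is $O(\log^2 n)$. On this event the recursion collapses: the leaf‑level order statistic is paid exactly once -- it dominates the $\Theta(1)$ local computations but is still $o(n)$ -- and each remaining level contributes only a sequential‑reception delay bracketed in $[(n-s)t_c,\, nt_c]$ and one hop.

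Both bounds then follow by inspecting the collapsed recursion. For the lower bound \eqref{eq:lowerb}, trace the critical path: a layer‑$(L-1)$ node cannot be ready before its $(n-s)$‑th leaf finishes, i.e.\ not before $a r_{\textsf{CR}} d + \frac{r_{\textsf{CR}} d}{\mu}\log(1/\alpha) - o(1)$; reaching the master needs $L-1$ further hops, and the master must then serially receive $n - s = n(1-\alpha) - o(n)$ messages, giving the $\big(n(1-\alpha) - o(n) + L - 1\big)(1-o(1))t_c$ term. For the upper bound \eqref{eq:upperb}, unroll from the leaves: the layer‑$(L-1)$ ready time is at most the maximal leaf order statistic, $\le a r_{\textsf{CR}} d + \frac{r_{\textsf{CR}} d}{\mu}\log(1/\alpha) + o(1)$, plus $n t_c$, and each of the remaining $L-1$ levels (counting the master) adds at most $n t_c$, for $n(1-o(1)) L t_c + o(1)$ total. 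To pass from these high‑probability deterministic estimates to $\Expc[T_{\textsf{CR}}]$, on the exceptional event I would bound $T_{\textsf{CR}} \le \sum_{\text{nodes}} T_i + \Theta(n^L)t_c$; by Cauchy--Schwarz its contribution to the expectation is $\mathrm{poly}(n)\cdot n^{-\omega(1)} = o(1)$, absorbed into the residual $o(1)$, while the $1-o(1)$ probability of the good event yields the $1-o(1)$ prefactors.

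The hardest part will be the queueing inside the recursion: one must show that the one‑child‑at‑a‑time reception at each of the $\Theta(n^{L-1})$ internal nodes, fed by children that finish only \emph{approximately} simultaneously, still costs -- up to $o(1)$ -- a delay bracketed between $(n-s)t_c$ and $nt_c$, and that these $o(1)$ slacks do not compound over the $L$ levels. That is exactly what forces the tail bound on the order statistic to be strong enough to survive the union bound over all internal nodes; it is also where the coefficients $n(1-\alpha)-o(n)+L-1$ versus $nL$ get pinned down, the $1-o(1)$ prefactors further absorbing the integrality of $s = \alpha n$ and the two endpoints of the reception bracket. The remaining work is bookkeeping of the $o(1)$ and $o(n)$ terms.
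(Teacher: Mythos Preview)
Your proposal is correct and follows essentially the same route as the paper: decompose into an order-statistic computation term plus a serial-communication term, evaluate $\Expc[T_{(n-s)}]=a r_{\textsf{CR}}d+\frac{r_{\textsf{CR}}d}{\mu}(H_n-H_s)$ via the R\'enyi representation, establish concentration of $T_{(n-s)}$ by Bernstein (variance $\Theta(1/n)$), and union-bound over the $\Theta(n^{L-1})$ sibling groups.

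There are two cosmetic differences worth noting. For the lower bound, the paper places the $(n-s)t_c$ serial-reception cost at a layer-$(L-1)$ parent and imports this as a black-box lemma from prior work (their Lemma~2), whereas you place it at the master via the elementary inequality $T_{\textsf{CR}}\geq \tau_{(1)}+(n-s)t_c$ with $\tau_{(1)}$ the first layer-$1$ ready time; your route avoids the citation and is arguably cleaner, but you should make explicit the two-sided concentration you need to lower-bound $\min_{gr} T^{gr}_{(n-s)}$. For the upper bound's exceptional event, the paper does not use Cauchy--Schwarz: it bounds $T_{\text{comp}}$ on the bad event by $\Theta(\log n)$ via $\Expc[T_{\max}]=\frac{r_{\textsf{CR}}d}{\mu}H_N+ar_{\textsf{CR}}d$ and an auxiliary event $\{\widetilde T>\Theta(\log n)\}$, then multiplies by the $\Theta(n^{L-1})e^{-\Theta(\sqrt n)}$ failure probability; your Cauchy--Schwarz argument works too and is more direct. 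The ``hardest part'' you flag---showing the per-level reception cost is bracketed in $[(n-s)t_c,\,nt_c]$---reduces to the one-line observation that a parent cannot finish $n-s$ serial receptions before (first child ready) $+(n-s)t_c$, which is all either proof actually uses.
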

\begin{remark}
Theorem \ref{thm:CRtime} implies that the expected run-time of the proposed \textsf{CR} algorithm breaks down into two terms: $\Expc \left[T_{\textsf{CR}} \right] = \Theta(1) + \Theta(n)$, where the two terms  $\Theta(1)$ and $\Theta(n)$ correspond to computation and communication times, respectively. As a special case, it also implies that the average run-time for \textsf{GC} is $\Expc \left[T_{\textsf{GC}} \right] = \Theta(1) + \Theta(N)$. This clearly demonstrates that \textsf{CR} is indeed alleviating the bandwidth bottleneck and it improves the communication parallelization gain from $\beta_{\textsf{GC}} = \Theta(1)$ to $\beta_{\textsf{CR}} = \Theta(N/n) = \Theta(N^{1-1/L})$ by parallelizing the communications over an $L$-layer tree structure. 
\end{remark}

\section{Empirical Evaluation of \textsf{CR}}\label{sec:experiments}
In this section, we provide the results of our experiments conducted over Amazon EC2, for which we used \texttt{Python} with \texttt{mpi4py} package. Our results demonstrate significant speedups of \textsf{CR} over baseline approaches. We consider two sets of machine learning experiments, one with a real data set, and another with an artificial data set. For each machine learning setting, we consider two cluster configurations, one with $N=84$ workers, and another with $N=156$ workers, using \texttt{t2.micro} instance for master and all workers. Furthermore, each experiment is run for $300$ rounds. Next, we describe the experiments in detail and provide the results.

\subsection{Convex Optimization}
\subsubsection{Real Data Set}
We consider the machine learning problem of logistic regression via gradient descent (GD) over the real data set GISETTE \cite{guyon2007competitive}. The problem is to separate the often confused digits `9' and `4'. We use $d=6552$ training samples, with model size $p=5001$. The following relative error rate is considered for model estimation:
\begin{equation}
\label{eq:RER}
    \text{Relative Error Rate } = \frac{\norm{ \theta^{(t)}-\theta^{(t-1)}}^2}{\norm{\theta^{(t-1)}}^2},
\end{equation}
where $\theta^{(t)}$ denotes the estimated model at iteration $t$. The following schemes are considered for data allocation and gradient aggregation:
\begin{figure}[h!]
            \centering \includegraphics[width= 0.47 \textwidth]{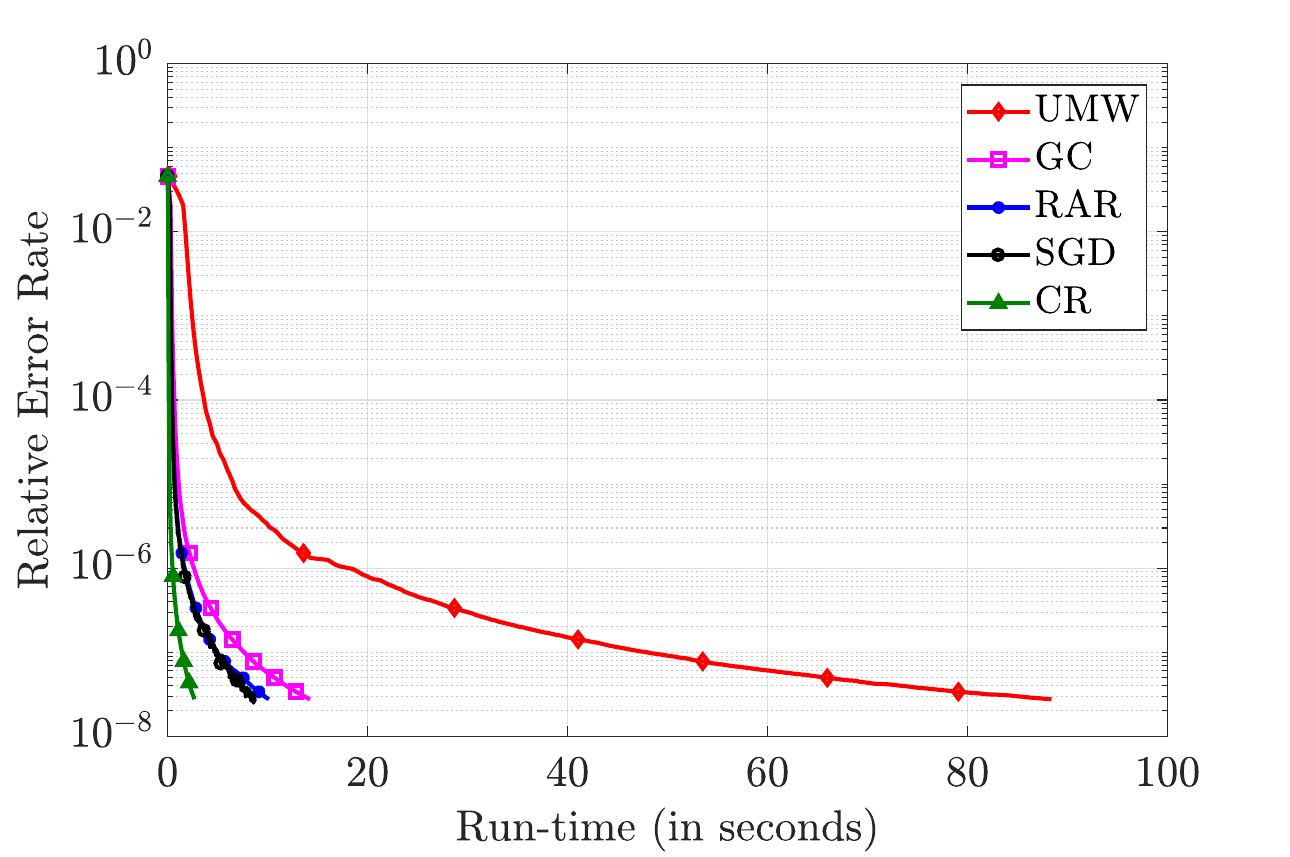}
        \caption{Convergence curves for relative error rate vs wall-clock time for logistic regression over $N=84$ workers. The straggler resiliency is $\alpha=1/4$. \textsf{CR} achieves a speedup of up to $32.8\times$, $5.3\times$, $3.8\times$ and $3.2\times$ respectively over \textsf{UMW}, \textsf{GC}, \textsf{RAR} and \textsf{SGD}.} 
  \label{fig:log84}
\end{figure} 

\begin{enumerate}
\item {Uncoded Master-worker (\textsf{UMW})}: This is the naive scheme in which the data set is uniformly partitioned among the workers, and the master waits for results from all the workers to aggregate the gradient. 

\item {Gradient Coding (\textsf{GC})}: We implement \textsf{GC} as described in Section \ref{subsec:GC}, with the straggler parameter $S=\alpha N$.

\item {Ring-AllReduce (\textsf{RAR})}: The data set is uniformly partitioned over the workers and the MPI function \texttt{MPI\_Allreduce()} is used for gradient aggregation.

\item {Stochastic Gradient Descent (\textsf{SGD})}: The data allocation is the same as \textsf{UMW}. However, the master updates the model using the partial gradient obtained via aggregating the results from results of \textit{only} the first $N-S$ children. Furthermore, as is typical in SGD experiments, we used a learning rate of $c_1/(t + c_2)$ where $c_1$ and $c_2$ were numerically optimized. 

\item {CodedReduce (\textsf{CR})}: We implement our proposed scheme  as presented in  Section \ref{sec:codedreduce} on a tree with $(n,L)=(12,2)$, while the straggler parameter $s=\alpha n$.

\end{enumerate}

\begin{figure}[h!]
    \centering
        \begin{subfigure}[h]{0.47\textwidth}
         \centering \includegraphics[width= 1.0 \textwidth]{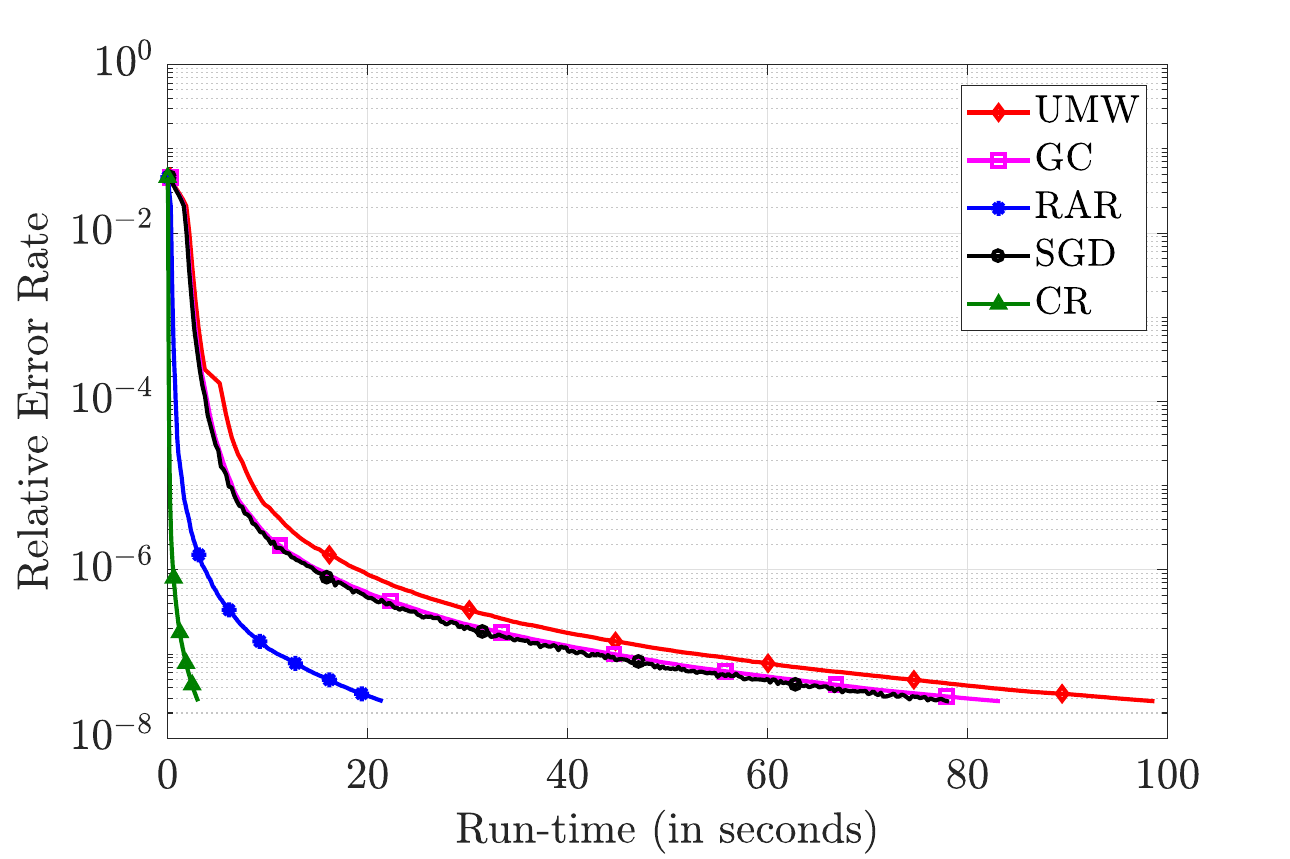}
        \caption{Convergence curves  for $\alpha=1/12$. \textsf{CR} achieves a speed up of up to $32.3\times$, $27.2\times$, $7.0\times$ and $25.4\times$ respectively over \textsf{UMW}, \textsf{GC}, \textsf{RAR} and \textsf{SGD}.} 
  \label{fig:log156_1by12}
    \end{subfigure}
     ~ 
    \begin{subfigure}[h]{0.47\textwidth}
        \centering \includegraphics[width= 1.0 \textwidth]{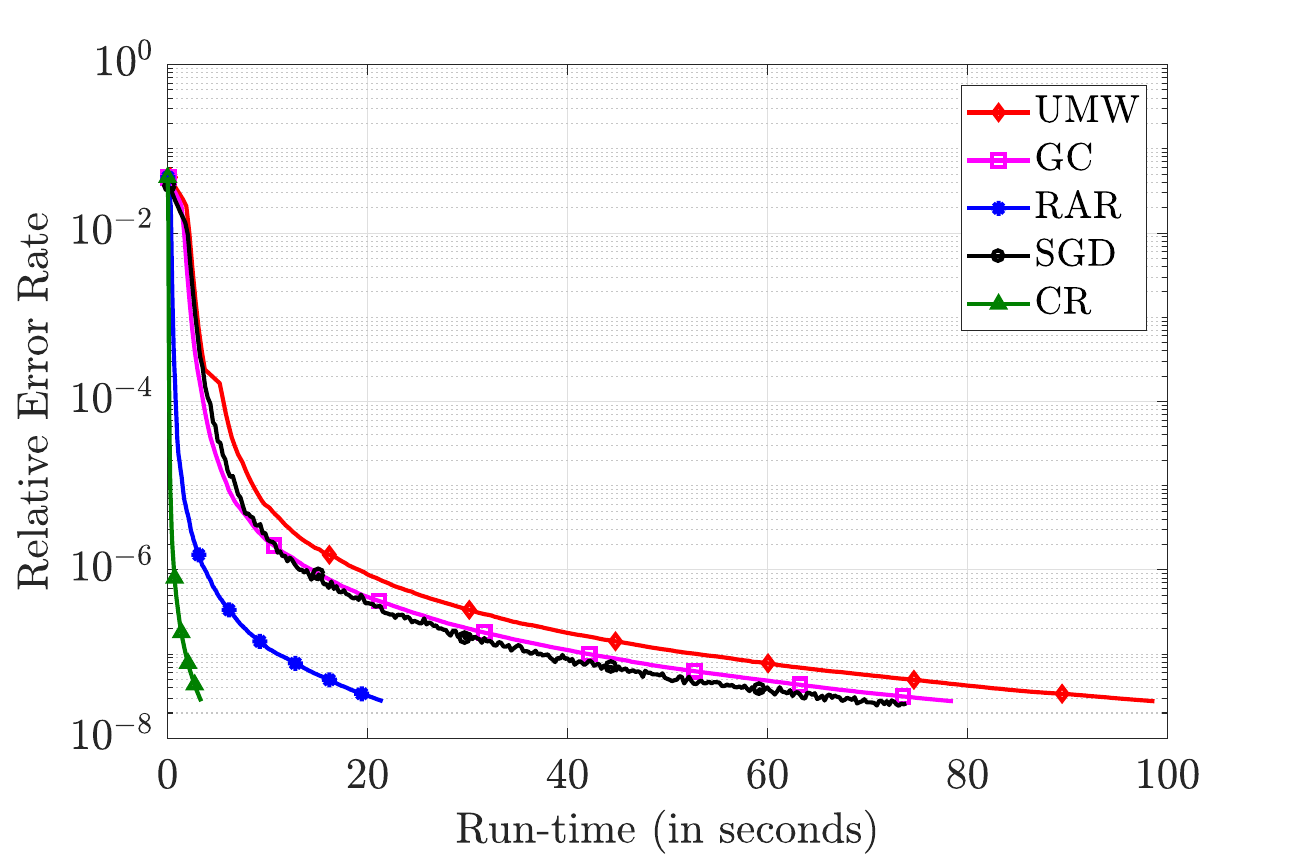}
        \caption{Convergence curves  for $\alpha=2/12$. \textsf{CR} achieves a speed up of up to $29.3\times$, $23.3\times$, $6.4\times$ and $21.9\times$ respectively over \textsf{UMW}, \textsf{GC}, \textsf{RAR} and \textsf{SGD}.} 
  \label{fig:log156_2by12}
    \end{subfigure}
    ~
    \begin{subfigure}[h]{0.47\textwidth}
        \centering \includegraphics[width= 1.0 \textwidth]{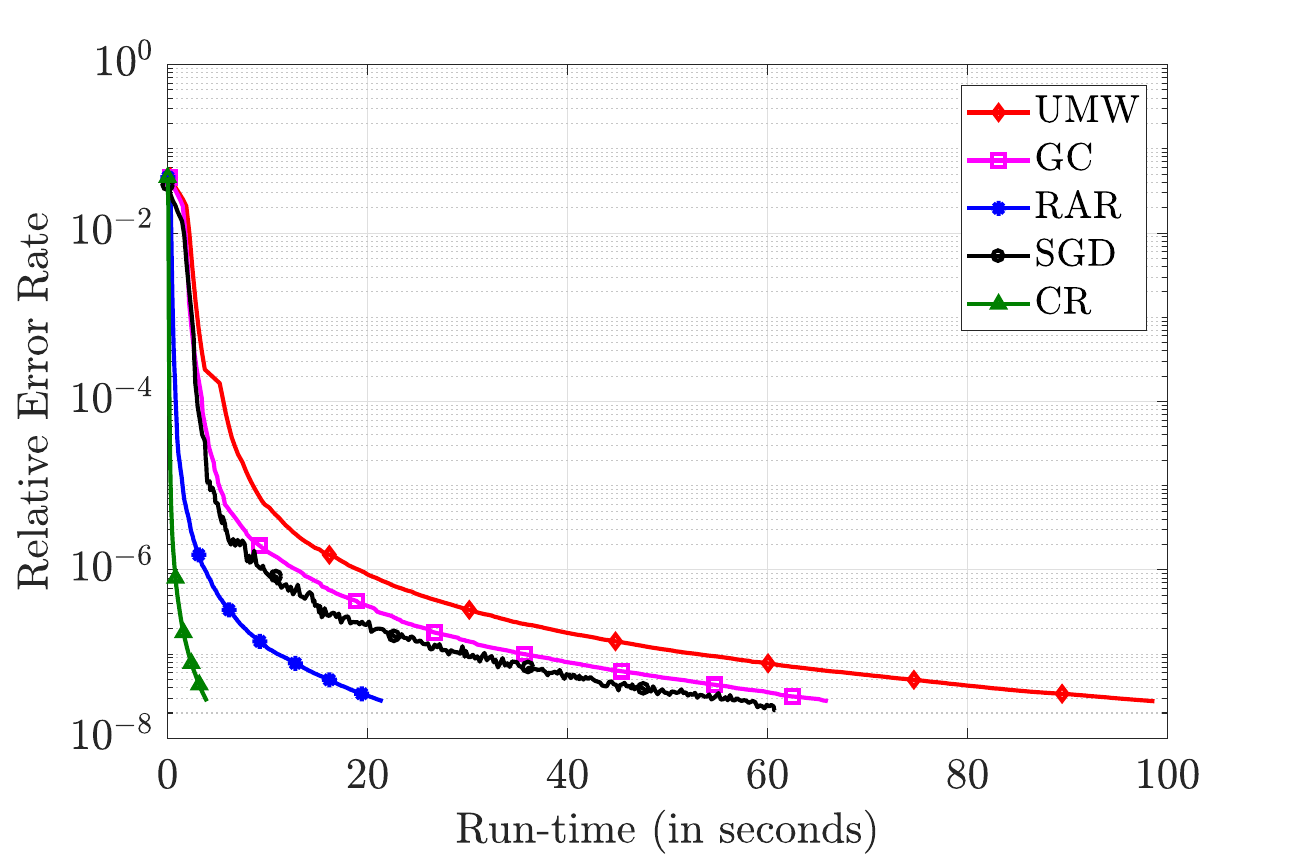}
        \caption{Convergence curves  for $\alpha=3/12$. \textsf{CR} achieves a speed up of up to $25.0\times$, $16.8\times$, $5.4\times$ and $15.4\times$ respectively over \textsf{UMW}, \textsf{GC}, \textsf{RAR} and \textsf{SGD}.} 
  \label{fig:log156_3by12}
    \end{subfigure}
    ~
    \caption{Convergence results for relative error rate vs wall-clock time for logistic regression over $N=156$ workers with different straggler resiliency $\alpha$.}
  \label{fig:log156}
\end{figure}

Next, we plot the relative error rate defined in (\ref{eq:RER}) as a function of wall-clock time for our logistic regression experiments with $N=84$ workers and $N=156$ workers respectively in Fig. \ref{fig:log84} and Fig. \ref{fig:log156}. For $N=84$, we consider a straggler-resiliency of $\alpha = 1/4$, while for $N=156$, we consider three different values of $\alpha: 1/12, 2/12 \text{ and } 3/12$. 

We make the following observations from the plots:

\begin{itemize}
    \item As demonstrated by Fig. \ref{fig:log84} and \ref{fig:log156}, \textsf{CR} achieves significant speedups over the baseline approaches. Specifically, for $(N,\alpha) = (84,1/4)$, \textsf{CR} is faster than \textsf{UMW}, \textsf{GC}, \textsf{RAR} and \textsf{SGD} by $32.8\times$, $5.3\times$, $3.8\times$ and $3.2\times$ respectively. For $(N,\alpha) = (156,1/12)$,
    \textsf{CR} achieves speedups of 
    $32.3\times$, $27.2\times$, $7.0\times$ and $25.4\times$ respectively over \textsf{UMW}, \textsf{GC}, \textsf{RAR} and \textsf{SGD}. Similar speedups are obtained with $(N,\alpha) = (156,2/12)$ and $(N,\alpha) = (156,3/12)$, as demonstrated by Fig. \ref{fig:log156}(\subref{fig:log156_2by12}) and Fig. \ref{fig:log156}(\subref{fig:log156_3by12}) respectively.
    \item Although \textsf{GC} gains over \textsf{UMW} by avoiding stragglers, its performance is still bottlenecked by bandwidth  congestion, and the increase in  computation load at each worker by a factor of $(S+1)$ in comparison to \textsf{UMW}. The bottlenecks are reflected in comparison with \textsf{SGD}, which has similar or better performance in comparison to \textsf{GC} due to much less computation load per worker. 
    \item \textsf{RAR} significantly outperforms \textsf{UMW} as well as \textsf{GC} for $N=84$ as well as $N=156$ worker settings. Although \textsf{RAR} achieves similar performance in comparison to \textsf{SGD} for $N=84$ workers scenario, it ultimately  beats all the schemes with the generic master-worker topology when the cluster size is increased to $N=156$. Our proposed \textsf{CR} algorithm combines the best of \textsf{GC} and \textsf{RAR} by providing straggler robustness via coding and alleviating bandwidth bottleneck via a tree topology.
\end{itemize}

\subsubsection{Artificial Data Set}

\begin{figure}[h!]
            \centering \includegraphics[width= 0.47 \textwidth]{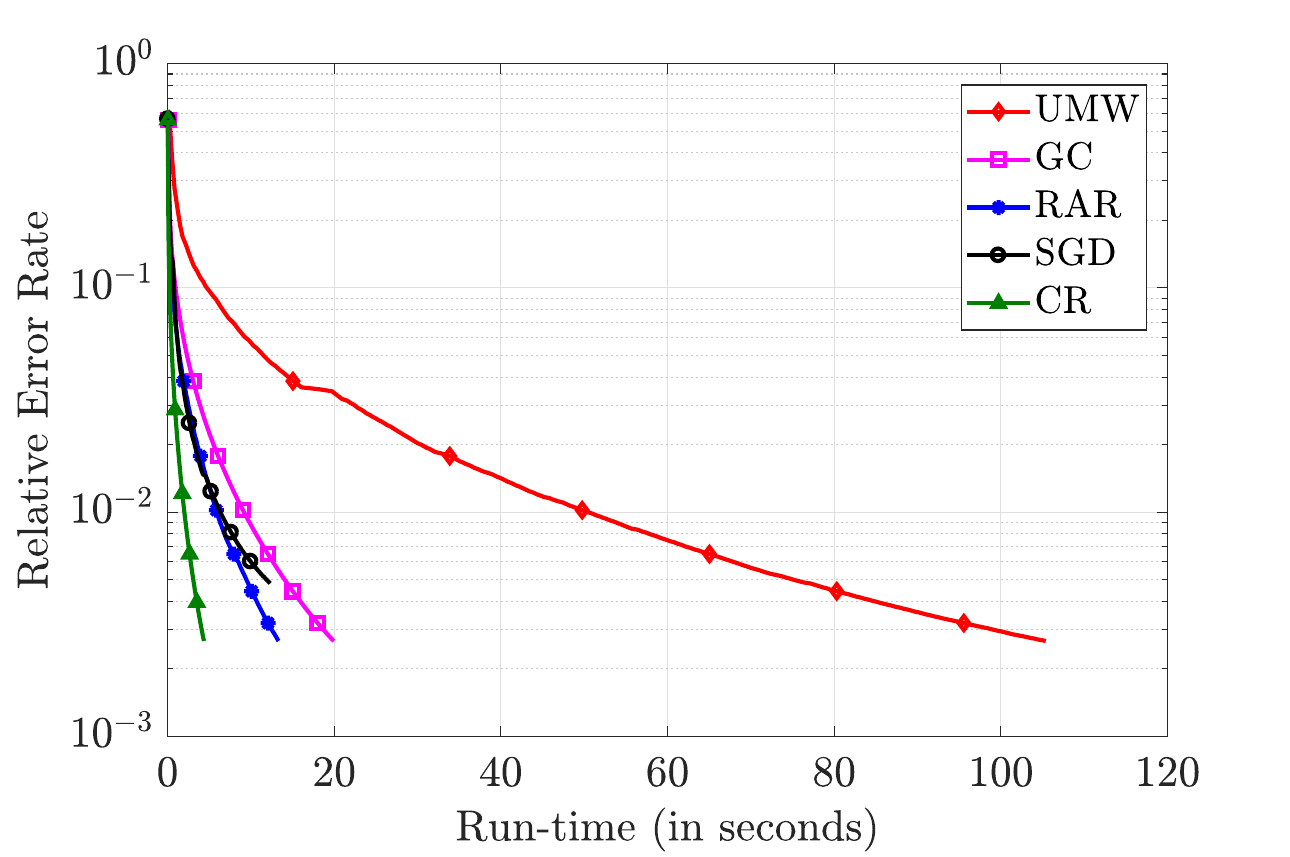}
        \caption{Convergence curves for normalized error rate vs wall-clock time for linear regression over $N=84$ workers. The straggler resiliency is $\alpha=1/4$. \textsf{CR} achieves a speedup of up to $24.1\times$, $4.6\times$, $3.0\times$ and $2.8\times$ respectively over \textsf{UMW}, \textsf{GC}, \textsf{RAR} and \textsf{SGD}.} 
  \label{fig:lin84}
\end{figure}
Next we solve a linear regression problem via GD over a synthetic data set with parameters $(d,p)=(7644,6500)$. We generate the data set using the following model:
\begin{equation}
    \bx_{j}(p+1) = \bx_{j}(1:p)^{\top} \theta_* + z_j, \quad \text{ for } j \in [d],
\end{equation}
where the true model $\theta_*$ and features $\bx_{j}(1:p) = [\bx_{j}(1);\cdots;\bx_{j}(p)]$ are drawn randomly from $\mathcal{N}(0,I_p)$ distribution and $z_j$ is  a standard Gaussian noise. We consider the following normalized error rate:
\begin{equation}
\label{eq:NER}
    \text{Normalized Error Rate} = \frac{\norm{ \theta^{(t)}-\theta_*}^2}{\norm{\theta_*}^2}.
\end{equation}

\begin{figure}[h!]
    \centering
        \begin{subfigure}[h]{0.47\textwidth}
         \centering \includegraphics[width= 1.0 \textwidth]{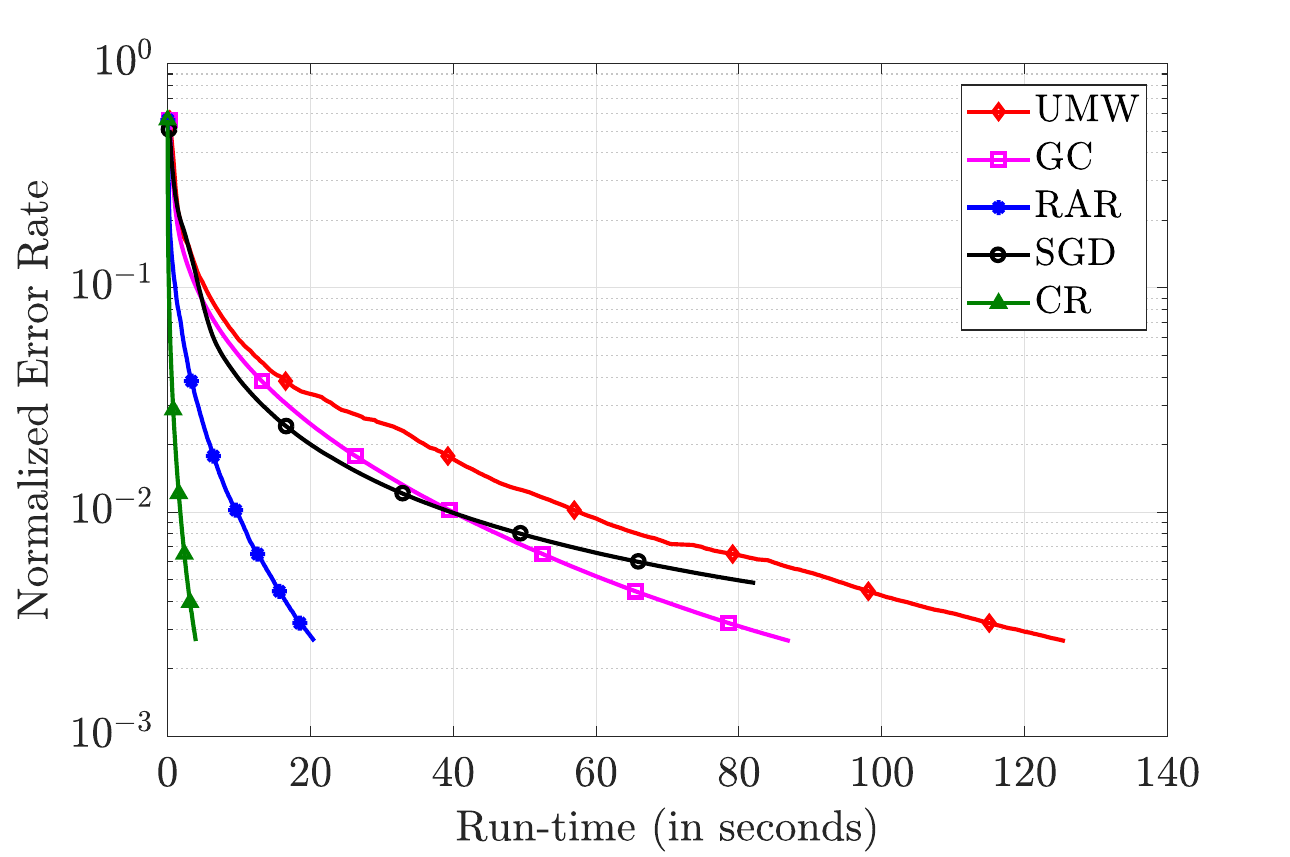}
        \caption{Convergence curves  for $\alpha=1/12$. \textsf{CR} achieves a speed up of up to $31.7\times$, $22.0\times$, $5.2\times$ and $20.7\times$ respectively over \textsf{UMW}, \textsf{GC}, \textsf{RAR} and \textsf{SGD}.} 
  \label{fig:lin156_1by12}
    \end{subfigure}
     ~ 
    \begin{subfigure}[h]{0.47\textwidth}
        \centering \includegraphics[width= 1.0 \textwidth]{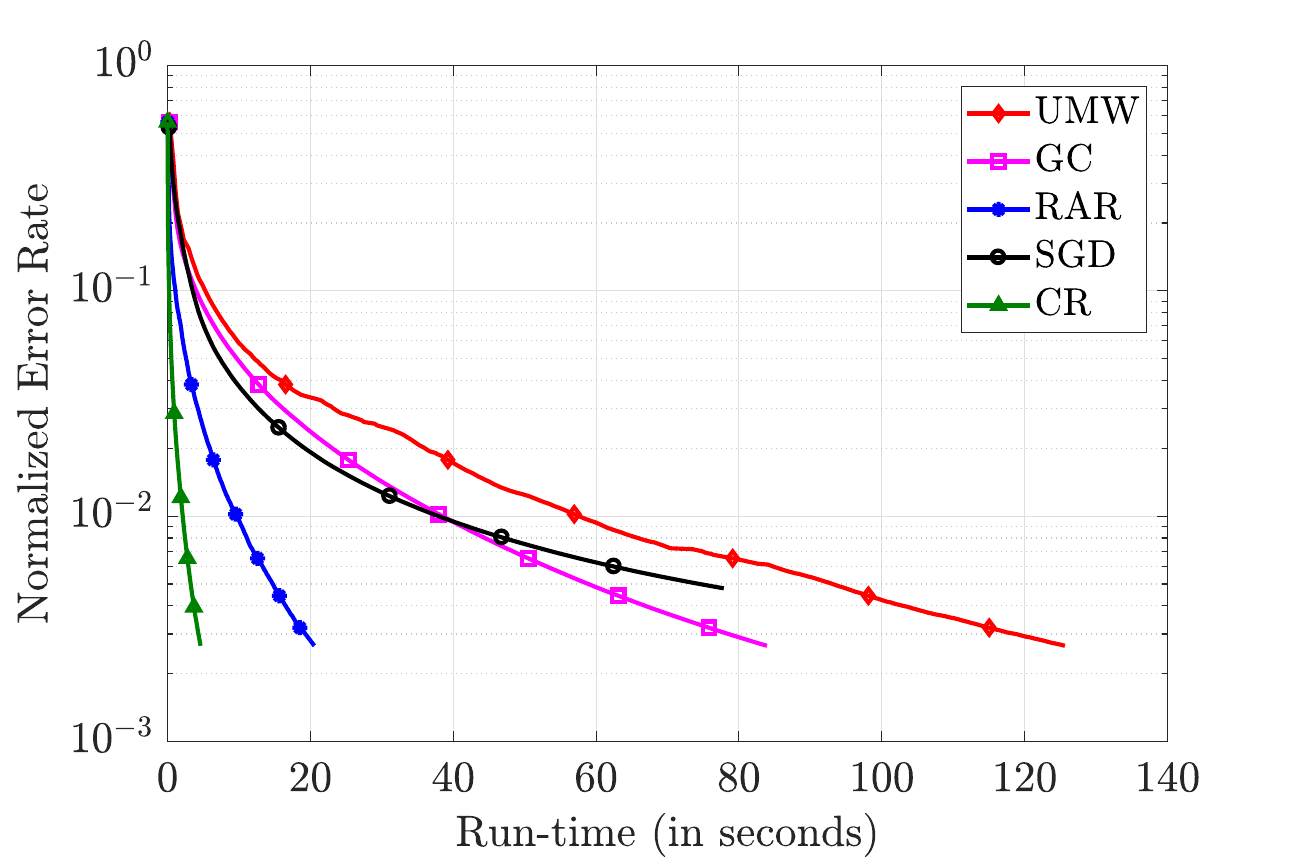}
        \caption{Convergence curves  for $\alpha=2/12$. \textsf{CR} achieves a speed up of up to $27.1\times$, $18.1\times$, $4.4\times$ and $16.8\times$ respectively over \textsf{UMW}, \textsf{GC}, \textsf{RAR} and \textsf{SGD}.} 
  \label{fig:lin156_2by12}
    \end{subfigure}
    ~
    \begin{subfigure}[h]{0.47\textwidth}
        \centering \includegraphics[width= 1.0 \textwidth]{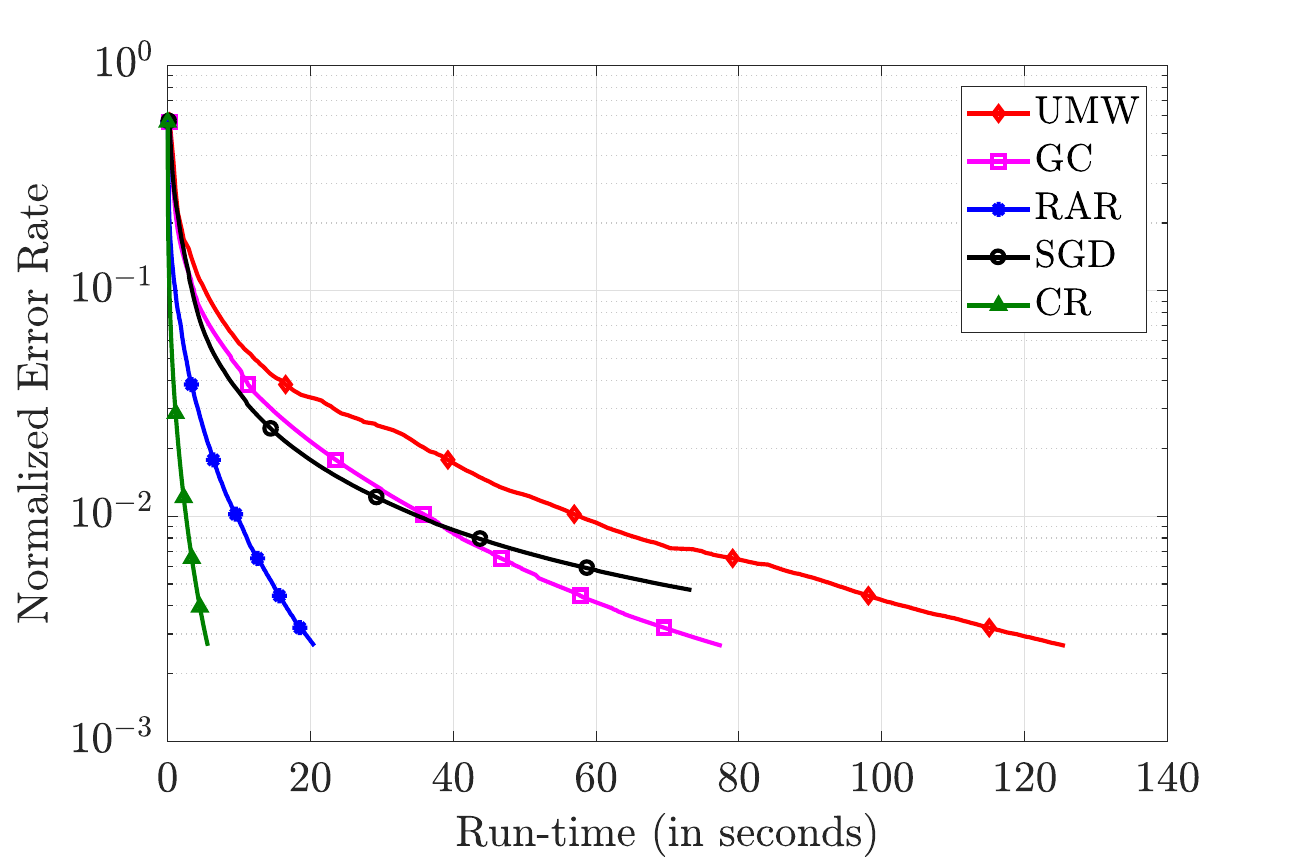}
        \caption{Convergence curves  for $\alpha=3/12$. \textsf{CR} achieves a speed up of up to $22.2\times$, $13.7\times$, $3.6\times$ and $13.0\times$ respectively over \textsf{UMW}, \textsf{GC}, \textsf{RAR} and \textsf{SGD}.} 
  \label{fig:lin156_3by12}
    \end{subfigure}
    ~
    \caption{Convergence results for normalized error rate vs wall-clock time for linear regression over $N=156$ workers with different straggler resiliency $\alpha$.}
  \label{fig:lin156}
\end{figure}


In Fig. \ref{fig:lin84} and \ref{fig:lin156}, we plot the normalized error rate defined in (\ref{eq:NER}) as a function of wall-clock time for $N=84$ and $N=156$ respectively. We consider similar configuration and schemes as for the experiments with real data set. The following observations are made with regard to the experiments:

\begin{itemize}
    \item As in the previous case of logistic regression with real data set, \textsf{CR} achieves significant speedups over baseline approaches for linear regression as well. Particularly, for $(N,\alpha)=(84,1/4)$, \textsf{CR} achieves speedups of $24.1\times$, $4.6\times$, $3.0\times$ and $2.8\times$ over \textsf{UMW}, \textsf{GC}, \textsf{RAR} and \textsf{SGD} respectively. When $(N,\alpha) = (156,1/12)$, \textsf{CR} achieves speedups of $31.7\times$, $22.0\times$, $5.2\times$ and $20.7\times$ in comparison to \textsf{UMW}, \textsf{GC}, \textsf{RAR} and \textsf{SGD} respectively. Similar speedups are obtained for $(N,\alpha) = (156,2/12)$ and $(N,\alpha)=(156,3/12)$. 
    \item \textsf{GC} performs better than \textsf{UMW} by avoiding stragglers. However, its performance is still bottlenecked by bandwidth congestion and the increase in computation load at each worker by a factor of $(S+1)$ in comparison to \textsf{UMW}. 
    \item \textsf{SGD} achieves a gain in per iteration time over \textsf{UMW} and \textsf{GC}. However, it has higher normalized error with respect to the true model. 
    \item Combined with the results of logistic regression, our experiments complement the theoretical gains of \textsf{CR} that have been established earlier. As demonstrated by the results, a tree-based topology is well-suited for bandwidth bottleneck alleviation in large-scale commodity clusters. Furthermore, the data allocation and coding strategy provide resiliency to stragglers. 
    \end{itemize}

\begin{figure}[h!]
            \centering \includegraphics[width= 0.47 \textwidth]{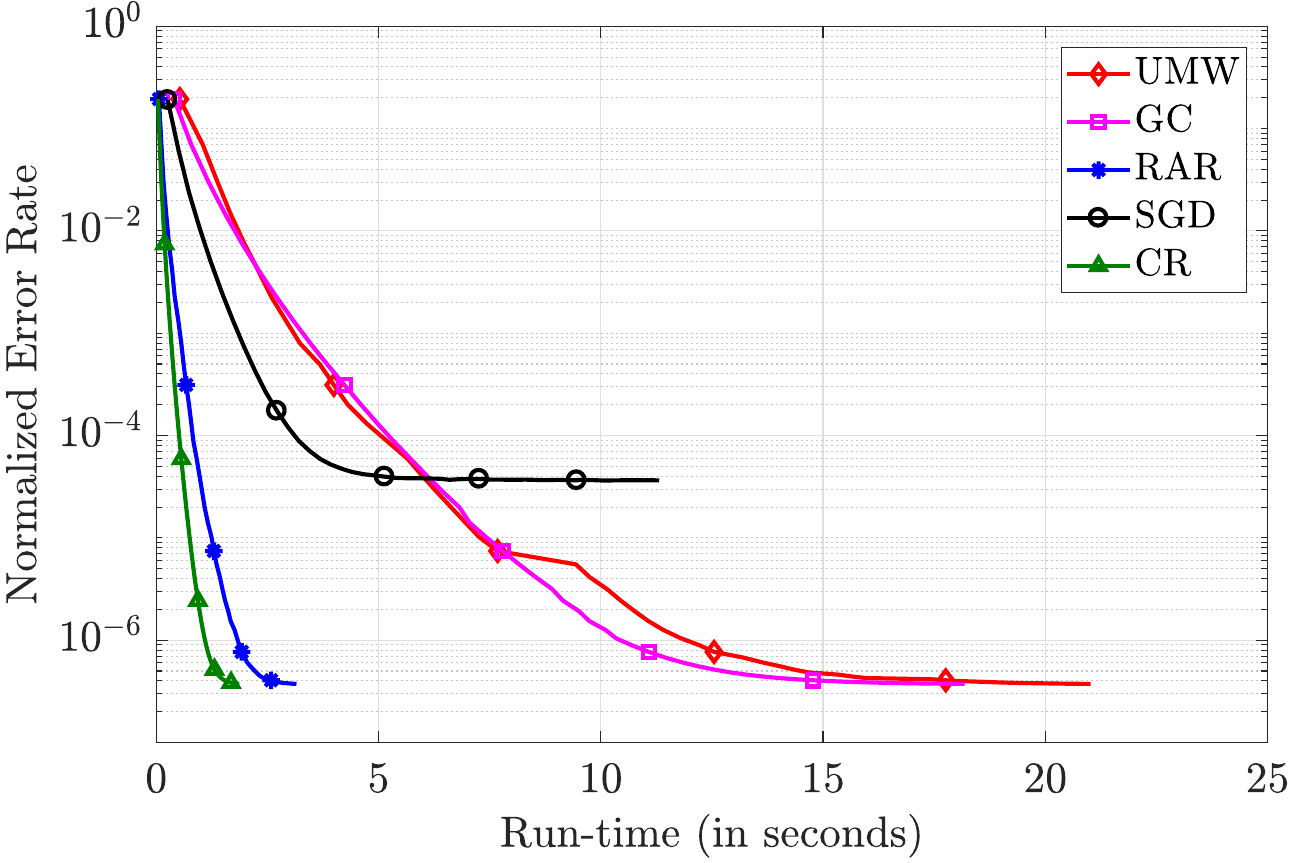}
        \caption{Convergence curves for normalized error rate vs wall-clock time for linear regression over $N=156$ workers and $(d,p)=(32760,5000)$. The straggler resiliency is $\alpha=1/4$ and the number of rounds is $50$.  \textsf{CR} achieves a speedup of up to $11.3\times$, $9.7\times$, $1.69\times$ and $6.1\times$ respectively over \textsf{UMW}, \textsf{GC}, \textsf{RAR} and \textsf{SGD}.} 
  \label{fig:large_exp}
\end{figure} 

\begin{remark}
    Till now, we have considered small-scale datasets in our experiments, which is motivated by the fact that in edge based devices with non-dedicated resources, the amount of memory available for computation shall be low. Nevertheless, our proposed scheme \textsf{CR} can speedup general machine learning in cloud environments. To illustrate this point, we have carried out another experiment with a larger dataset $(d,p) = (32760,500)$, with $(N,\alpha)=(156,1/4)$. As illustrated by Fig. \ref{fig:large_exp}, \textsf{CR} outperforms the baseline approaches by considerable margins. Specifically, \textsf{CR} achieves a speedup of $11.3\times, 9.7\times, 1.69\times$ and $6.1\times$ over \textsf{UMW}, \textsf{GC}, \textsf{RAR} and \textsf{SGD} respectively.   
\end{remark}

\subsection{Neural Networks}
We carry out simulations for evaluating the benefits of \textsf{CR} in distributed training of neural networks with cross-entropy loss, which essentially involves non-convex and non-smooth loss functions due to variety of non-linearities such as ReLUs. For this, we consider the CIFAR10 dataset \cite{cifar10dataset}, which has $10$ different categories of images. CIFAR10 has $50000$ images while the test dataset has $10000$ images. We provide the details of the neural network in Table \ref{tab:nn_arch}. We use an initial step size of $0.02$, and a step decay of $0.7$ at iterations $1300$ and $2100$. We use Glorot uniform initializer for initializing the convolutional layer weights, while for fully connected layers, we use the default initializer. 

We consider a cluster of $N=156$ servers, a resiliency of $5/12$, and $n=12$ children per node for \textsf{CR}. We use a random subset of $d=49920$ training images for training. Accuracy is reported on test dataset. We use the Pytorch library for neural network training. Furthermore, we use the computation and communication model as described earlier, where we assume $t_c=0.05$ seconds,  $a=5{\times}10^{-5}$ seconds/data, and assume $a\mu=1$. 

In Fig. \ref{fig:nn_exp}, we plot the accuracy vs wall-clock time curves for the different approaches, where training is carried out for a total of $2500$ iterations. Clearly, \textsf{CR} outperforms other approaches by significant margins. Particularly, \textsf{CR} achieves a speedup of up to $6.6\times$, $4.8\times$, $1.8\times$ and $4.0\times$ respectively over \textsf{UMW}, \textsf{GC}, \textsf{RAR} and \textsf{SGD}.   

\begin{figure}[h!]
            \centering \includegraphics[width= 0.47 \textwidth]{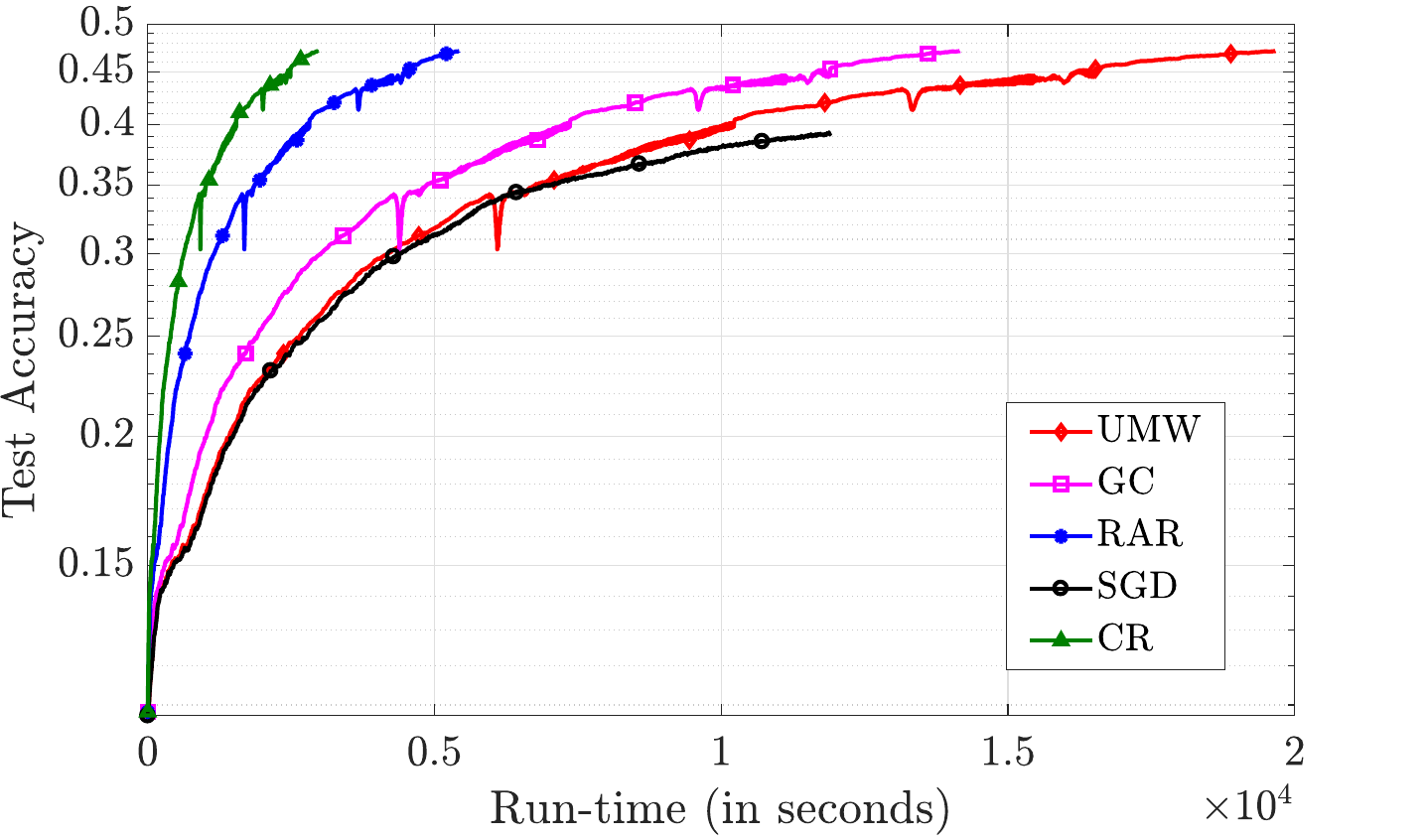}
        \caption{Convergence curves for test accuracy vs wall-clock time for neural network training over $N=156$ workers. The neural network model has $p\approx 120,000$  parameters. The straggler resiliency is $\alpha=5/12$ and the number of rounds is $2500$.  \textsf{CR} achieves a speedup of up to $6.6\times$, $4.8\times$, $1.8\times$ and $4.0\times$ respectively over \textsf{UMW}, \textsf{GC}, \textsf{RAR} and \textsf{SGD}.} 
  \label{fig:nn_exp}
\end{figure}

\begin{table}[htb!]
\caption{Details of the neural network architecture used in the simulations.}\label{tab:nn_arch}
\centering
\begin{tabular}{|l|l|l|l|}
\hline
\textbf{Sl. No.} & \textbf{Parameter} & \textbf{Shape} & \textbf{Hyperparameters} \\ \hline
$1$ & Conv2d& $3{\times}16{\times}3{\times}3$& stride$=1$, padding$=(1, 1)$ \\ \hline
$2$ & Conv2d& $16{\times}64{\times}4{\times}4$& stride$=1$, padding$=(0, 0)$ \\ \hline
$3$ & Linear& $64{\times}384$&-\\ \hline
$4$ & Linear& $384{\times}192$&-\\ \hline
$5$ & Linear& $192{\times}10$&-\\ \hline
\end{tabular}
\end{table}

\section{Conclusion}\label{sec:conclusion}
To conclude, we discussed two critical bottlenecks in scaling up Gradient Descent-based distributed learning frameworks: communication efficiency and stragglers' delays. We proposed CodedReduce (\textsf{CR}), that is a joint communication topology design and data set allocation strategy. \textsf{CR} combines the best of two existing approaches--Ring-AllReduce (\textsf{RAR}) and Gradient Coding (\textsf{GC})--by leveraging communication parallelization of \textsf{RAR} and straggler resiliency of \textsf{GC}. Theoretically, we characterized the computation load and straggler resiliency of \textsf{CR} and its asymptotic expected run-time. Lastly,  we  empirically  demonstrated that our proposed \textsf{CR} design achieves speedups of up to $27.2\times$ and $7.0\times$, respectively  over  the  \textsf{GC} and \textsf{RAR}.

We also discussed that although the main goal in the proposed \textsf{CR} design is to recover the \emph{exact} total gradient in each iteration of GD, one can relax this goal to inexact gradient aggregation leading to SGD-type optimization methods. We discussed how straggler resiliency and communication efficiency in GD-type methods can be improved by employing the \textsf{CR} design, while requiring lower computation complexity compared to naive SGD-type procedures. We note that although SGD has been widely considered for large-scale training, GD is still the prominent choice in many industry settings where one wants to make sure that the gradient computations are done completely so as not to lose even a little bit of performance. This is very critical since the model will be used by millions of people and even a slight improvement by GD would be useful. We note that \textsf{CR} may not be applicable in SGD settings in its current fashion. The reason is that the whole coded task allocation and execution described in the proposed \textsf{CR} algorithm is for the purpose of \emph{exact} gradient recovery, i.e. GD. Such elaborate and extra gradient computation makes less sense if we relax our goal to inexact gradient recover, i.e. SGD. There are simple and complexity efficient approaches to deal with stragglers in SGD settings, such as wait for $\alpha$ fraction of nodes to respond, as explained in Sections \ref{sec:codedreduce} and \ref{sec:experiments}. It is yet an interesting future direction to study potential coding opportunities for straggler mitigation in SGD scenarios.

Lastly, the tree structure proposed in this paper opens up new interesting directions in order to further improve the resiliency of distributed gradient aggregation schemes. For instance, given a fix set of available worker nodes, how can one find the optimal tree (i.e. optimal depth and width) in order to minimize the expected run-time.


\bibliographystyle{ieeetr}
\bibliography{biblio}

\begin{thebibliography}{10}

\bibitem{reisizadeh2019tree}
A.~Reisizadeh, S.~Prakash, R.~Pedarsani, and A.~S. Avestimehr, ``Tree gradient
  coding,'' in {\em 2019 IEEE International Symposium on Information Theory
  (ISIT)}, pp.~2808--2812, IEEE, 2019.

\bibitem{dekel2012optimal}
O.~Dekel, R.~Gilad-Bachrach, O.~Shamir, and L.~Xiao, ``{Optimal Distributed
  Online Prediction Using Mini-Batches},'' {\em Journal of Machine Learning
  Research}, vol.~13, no.~Jan, pp.~165--202, 2012.

\bibitem{zinkevich2010parallelized}
M.~Zinkevich, M.~Weimer, L.~Li, and A.~J. Smola, ``{Parallelized Stochastic
  Gradient Descent},'' in {\em Advances in Neural Information Processing
  Systems}, pp.~2595--2603, 2010.

\bibitem{chen2016revisiting}
J.~Chen, X.~Pan, R.~Monga, S.~Bengio, and R.~Jozefowicz, ``Revisiting
  distributed synchronous {SGD},'' {\em arXiv preprint arXiv:1604.00981}, 2016.

\bibitem{recht2011hogwild}
B.~Recht, C.~Re, S.~Wright, and F.~Niu, ``{Hogwild: A Lock-Free Approach to
  Parallelizing Stochastic Gradient Descent},'' in {\em Advances in Neural
  Information Processing Systems}, pp.~693--701, 2011.

\bibitem{dean2012large}
J.~Dean, G.~Corrado, R.~Monga, K.~Chen, M.~Devin, M.~Mao, A.~Senior, P.~Tucker,
  K.~Yang, Q.~V. Le, {\em et~al.}, ``{Large Scale Distributed Deep Networks},''
  in {\em Advances in neural information processing systems}, pp.~1223--1231,
  2012.

\bibitem{chilimbi2014project}
T.~M. Chilimbi, Y.~Suzue, J.~Apacible, and K.~Kalyanaraman, ``{Project Adam:
  Building an Efficient and Scalable Deep Learning Training System.},'' in {\em
  OSDI}, vol.~14, pp.~571--582, 2014.

\bibitem{abadi2016tensorflow}
M.~Abadi, P.~Barham, J.~Chen, Z.~Chen, A.~Davis, J.~Dean, M.~Devin,
  S.~Ghemawat, G.~Irving, M.~Isard, {\em et~al.}, ``{TensorFlow: A System for
  Large-Scale Machine Learning}.,'' in {\em OSDI}, vol.~16, pp.~265--283, 2016.

\bibitem{cong2017efficient}
G.~Cong, O.~Bhardwaj, and M.~Feng, ``An efficient, distributed stochastic
  gradient descent algorithm for deep-learning applications,'' in {\em Parallel
  Processing (ICPP), 2017 46th International Conference on}, pp.~11--20, IEEE,
  2017.

\bibitem{patarasuk2009bandwidth}
P.~Patarasuk and X.~Yuan, ``Bandwidth optimal all-reduce algorithms for
  clusters of workstations,'' {\em Journal of Parallel and Distributed
  Computing}, vol.~69, no.~2, pp.~117--124, 2009.

\bibitem{patarasuk2007bandwidth}
P.~Patarasuk and X.~Yuan, ``{Bandwidth Efficient All-reduce Operation on Tree
  Topologies},'' in {\em Parallel and Distributed Processing Symposium, 2007.
  IPDPS 2007. IEEE International}, pp.~1--8, IEEE, 2007.

\bibitem{thakur2005optimization}
R.~Thakur, R.~Rabenseifner, and W.~Gropp, ``{Optimization of Collective
  Communication Operations in MPICH},'' {\em The International Journal of High
  Performance Computing Applications}, vol.~19, no.~1, pp.~49--66, 2005.

\bibitem{kandalla2010designing}
K.~Kandalla, H.~Subramoni, A.~Vishnu, and D.~K. Panda, ``Designing
  topology-aware collective communication algorithms for large scale infiniband
  clusters: Case studies with {Scatter} and {Gather},'' in {\em Parallel \&
  Distributed Processing, Workshops and Phd Forum (IPDPSW), 2010 IEEE
  International Symposium on}, pp.~1--8, IEEE, 2010.

\bibitem{gibiansky2017bringing}
``{Bringing HPC Techniques to Deep Learning}.''
  \url{https://andrew.gibiansky.com/blog/machine-learning/baidu-allreduce/}.
\newblock Accessed: 2019-01-01.

\bibitem{sergeev2018horovod}
A.~Sergeev and M.~Del~Balso, ``Horovod: fast and easy distributed deep learning
  in {TensorFlow},'' {\em arXiv preprint arXiv:1802.05799}, 2018.

\bibitem{jin2016scale}
P.~H. Jin, Q.~Yuan, F.~Iandola, and K.~Keutzer, ``How to scale distributed deep
  learning?,'' {\em ML Systems Workshop, NIPS}, 2016.

\bibitem{li2018pipe}
Y.~Li, M.~Yu, S.~Li, S.~Avestimehr, N.~S. Kim, and A.~Schwing, ``{Pipe-SGD: A
  Decentralized Pipelined SGD Framework for Distributed Deep Net Training},''
  in {\em Advances in Neural Information Processing Systems}, pp.~8056--8067,
  2018.

\bibitem{yu2018gradiveq}
M.~Yu, Z.~Lin, K.~Narra, S.~Li, Y.~Li, N.~S. Kim, A.~Schwing, M.~Annavaram, and
  S.~Avestimehr, ``{GradiVeQ: Vector Quantization for Bandwidth-Efficient
  Gradient Aggregation in Distributed CNN Training},'' in {\em Advances in
  Neural Information Processing Systems}, pp.~5129--5139, 2018.

\bibitem{sun2020lazily}
J.~Sun, T.~Chen, G.~B. Giannakis, Q.~Yang, and Z.~Yang, ``Lazily aggregated
  quantized gradient innovation for communication-efficient federated
  learning,'' {\em IEEE Transactions on Pattern Analysis and Machine
  Intelligence}, 2020.

\bibitem{dean2013tail}
J.~Dean and L.~A. Barroso, ``The tail at scale,'' {\em Communications of the
  ACM}, vol.~56, no.~2, pp.~74--80, 2013.

\bibitem{zhao2017efficient}
Y.~Zhao, L.~Wang, W.~Wu, G.~Bosilca, R.~Vuduc, J.~Ye, W.~Tang, and Z.~Xu,
  ``{Efficient Communications in Training Large Scale Neural Networks},'' in
  {\em Proceedings of the on Thematic Workshops of ACM Multimedia 2017},
  pp.~110--116, ACM, 2017.

\bibitem{zaharia2008improving}
M.~Zaharia, A.~Konwinski, A.~D. Joseph, R.~H. Katz, and I.~Stoica, ``{Improving
  MapReduce Performance in Heterogeneous Environments},'' in {\em OSDI},
  vol.~8, p.~7, 2008.

\bibitem{pu2015low}
Q.~Pu, G.~Ananthanarayanan, P.~Bodik, S.~Kandula, A.~Akella, P.~Bahl, and
  I.~Stoica, ``{Low Latency Geo-distributed Data Analytics},'' in {\em ACM
  SIGCOMM Computer Communication Review}, vol.~45, pp.~421--434, ACM, 2015.

\bibitem{ananthanarayanan2013effective}
G.~Ananthanarayanan, A.~Ghodsi, S.~Shenker, and I.~Stoica, ``{Effective
  Straggler Mitigation: Attack of the Clones.},'' in {\em NSDI}, vol.~13,
  pp.~185--198, 2013.

\bibitem{wang2014efficient}
D.~Wang, G.~Joshi, and G.~Wornell, ``Efficient task replication for fast
  response times in parallel computation,'' in {\em ACM SIGMETRICS Performance
  Evaluation Review}, vol.~42, pp.~599--600, ACM, 2014.

\bibitem{shah2016redundant}
N.~B. Shah, K.~Lee, and K.~Ramchandran, ``{When Do Redundant Requests Reduce
  Latency?},'' {\em IEEE Transactions on Communications}, vol.~64, no.~2,
  pp.~715--722, 2016.

\bibitem{lee2016speeding}
K.~Lee, M.~Lam, R.~Pedarsani, D.~Papailiopoulos, and K.~Ramchandran,
  ``{Speeding Up Distributed Machine Learning Using Codes},'' {\em IEEE
  Transactions on Information Theory}, vol.~64, no.~3, pp.~1514--1529, 2018.

\bibitem{dutta2016short}
S.~Dutta, V.~Cadambe, and P.~Grover, ``{``Short-Dot'': Computing Large Linear
  Transforms Distributedly Using Coded Short Dot Products},'' in {\em Advances
  In Neural Information Processing Systems}, pp.~2100--2108, 2016.

\bibitem{reisizadeh2017latency}
A.~Reisizadeh and R.~Pedarsani, ``Latency analysis of coded computation schemes
  over wireless networks,'' in {\em Communication, Control, and Computing
  (Allerton), 2017 55th Annual Allerton Conference on}, pp.~1256--1263, IEEE,
  2017.

\bibitem{yu2017polynomial}
Q.~Yu, M.~Maddah-Ali, and S.~Avestimehr, ``{Polynomial Codes: an Optimal Design
  for High-Dimensional Coded Matrix Multiplication},'' in {\em Advances in
  Neural Information Processing Systems}, pp.~4403--4413, 2017.

\bibitem{karakus2017straggler}
C.~Karakus, Y.~Sun, S.~Diggavi, and W.~Yin, ``{Straggler Mitigation in
  Distributed Optimization Through Data Encoding},'' in {\em Advances in Neural
  Information Processing Systems}, pp.~5440--5448, 2017.

\bibitem{ye2018communication}
M.~Ye and E.~Abbe, ``{Communication-Computation Efficient Gradient Coding},''
  in {\em Proceedings of the 35th International Conference on Machine
  Learning}, vol.~80, pp.~5610--5619, 10--15 Jul 2018.

\bibitem{yu2018lagrange}
Q.~Yu, S.~Li, N.~Raviv, M.~Kalan, M.~Soltanolkotabi, and A.~S. Avestimehr,
  ``{Lagrange Coded Computing: Optimal Design for Resiliency, Security and
  Privacy},'' {\em To appear in Proceedings of 2019 AISTATS}, 2019.

\bibitem{narra2019slack}
K.~G. Narra, Z.~Lin, M.~Kiamari, S.~Avestimehr, and M.~Annavaram, ``Slack
  squeeze coded computing for adaptive straggler mitigation,'' in {\em
  Proceedings of the International Conference for High Performance Computing,
  Networking, Storage and Analysis}, pp.~1--16, 2019.

\bibitem{narra2020collage}
H.~V. K.~G. Narra, Z.~Lin, G.~Ananthanarayanan, S.~Avestimehr, and
  M.~Annavaram, ``Collage inference: Using coded redundancy for lowering
  latency variation in distributed image classification systems,'' in {\em 2020
  IEEE 40th International Conference on Distributed Computing Systems (ICDCS)},
  pp.~453--463, IEEE, 2020.

\bibitem{yang2019timely}
C.-S. Yang, R.~Pedarsani, and A.~S. Avestimehr, ``Timely-throughput optimal
  coded computing over cloud networks,'' in {\em Proceedings of the Twentieth
  ACM International Symposium on Mobile Ad Hoc Networking and Computing},
  pp.~301--310, 2019.

\bibitem{dhakal2019coded}
S.~Dhakal, S.~Prakash, Y.~Yona, S.~Talwar, and N.~Himayat, ``Coded computing
  for distributed machine learning in wireless edge network,'' in {\em 2019
  IEEE 90th Vehicular Technology Conference (VTC2019-Fall)}, pp.~1--6, IEEE,
  2019.

\bibitem{tandon2017gradient}
R.~Tandon, Q.~Lei, A.~G. Dimakis, and N.~Karampatziakis, ``{Gradient Coding:
  Avoiding Stragglers in Distributed Learning },'' in {\em International
  Conference on Machine Learning}, pp.~3368--3376, 2017.

\bibitem{lee2016gradient}
J.~D. Lee, M.~Simchowitz, M.~I. Jordan, and B.~Recht, ``Gradient descent only
  converges to minimizers,'' in {\em Conference on learning theory},
  pp.~1246--1257, PMLR, 2016.

\bibitem{liang2014tofec}
G.~Liang and U.~C. Kozat, ``{TOFEC}: Achieving optimal throughput-delay
  trade-off of cloud storage using erasure codes,'' in {\em INFOCOM, 2014
  Proceedings IEEE}, pp.~826--834, IEEE, 2014.

\bibitem{reisizadeh2017coded}
A.~Reisizadeh, S.~Prakash, R.~Pedarsani, and S.~Avestimehr, ``{Coded
  Computation over Heterogeneous Clusters},'' in {\em 2017 IEEE International
  Symposium on Information Theory (ISIT)}, pp.~2408--2412, IEEE, 2017.

\bibitem{li2018near}
S.~Li, S.~M.~M. Kalan, A.~S. Avestimehr, and M.~Soltanolkotabi, ``Near-optimal
  straggler mitigation for distributed gradient methods,'' in {\em 2018 IEEE
  International Parallel and Distributed Processing Symposium Workshops
  (IPDPSW)}, pp.~857--866, IEEE, 2018.

\bibitem{guyon2007competitive}
I.~Guyon, J.~Li, T.~Mader, P.~A. Pletscher, G.~Schneider, and M.~Uhr,
  ``Competitive baseline methods set new standards for the {NIPS} 2003 feature
  selection benchmark,'' {\em Pattern recognition letters}, vol.~28, no.~12,
  pp.~1438--1444, 2007.

\bibitem{cifar10dataset}
A.~Krizhevsky, V.~Nair, and G.~Hinton, ``Cifar-10 (canadian institute for
  advanced research),''

\end{thebibliography}

\begin{IEEEbiographynophoto}{Amirhossein Reisizadeh}
received his B.S. degree form Sharif University of Technology, Tehran, Iran in 2014 and an M.S. degree from University of California, Los Angeles (UCLA) in 2016, both in Electrical Engineering. He is currently pursuing his Ph.D. in  Electrical and Computer Engineering at University of California, Santa Barbara (UCSB). He was a finalist in the Qualcomm Innovation Fellowship program in 2019. He is interested in using information and coding-theoretic concepts to develop fast and efficient algorithms for large-scale machine learning, distributed computing and optimization.
\end{IEEEbiographynophoto}

\begin{IEEEbiographynophoto}{Saurav Prakash} 
received the Bachelor of Technology degree in Electrical Engineering from the Indian Institute of Technology (IIT), Kanpur, India in 2016. He is currently pursuing the Ph.D. degree in Electrical and Computer Engineering at the University of Southern California (USC), Los Angeles. His research interests include information theory and data analytics with applications in large-scale machine learning and edge computing. He is one of the recipients of the  Qualcomm Innovation Fellowship 2021. Saurav also received the Annenberg Graduate Fellowship in 2016 and was one of the Viterbi-India fellows in summer 2015. 
\end{IEEEbiographynophoto}

\begin{IEEEbiographynophoto}{Ramtin Pedarsani}
is an Assistant Professor in ECE Department at the University of California, Santa Barbara. He received the B.Sc. degree in electrical engineering from the University of Tehran, Tehran, Iran, in 2009, the M.Sc. degree in communication systems from the Swiss Federal Institute of Technology (EPFL), Lausanne, Switzerland, in 2011, and his Ph.D. from the University of California, Berkeley, in 2015. His research interests include machine learning, information and coding theory, networks, and transportation systems. Ramtin is a recipient of the IEEE international conference on communications (ICC) best paper award in 2014.
\end{IEEEbiographynophoto}

\begin{IEEEbiographynophoto}{A. Salman Avestimehr}
is a Professor and director of the Information Theory and Machine Learning (vITAL) research lab at the Electrical and Computer Engineering Department of University of Southern California. He received his Ph.D. in 2008 and M.S. degree in 2005 in Electrical Engineering and Computer Science, both from the University of California, Berkeley. Prior to that, he obtained his B.S. in Electrical Engineering from Sharif University of Technology in 2003. His research interests include information theory, coding theory, and large-scale distributed computing and machine learning.

Dr. Avestimehr has received a number of awards for his research, including the James L. Massey Research \& Teaching Award from IEEE Information Theory Society, an Information Theory Society and Communication Society Joint Paper Award, a Presidential Early Career Award for Scientists and Engineers (PECASE) from the White House, a Young Investigator Program (YIP) award from the U. S. Air Force Office of Scientific Research, a National Science Foundation CAREER award, the David J. Sakrison Memorial Prize, and several Best Paper Awards at Conferences. He is a Fellow of IEEE. He has been an Associate Editor for IEEE Transactions on Information Theory. He is currently a general Co-Chair of the 2020 International Symposium on Information Theory (ISIT).
\end{IEEEbiographynophoto}

\newpage 

\begin{appendices}
\onecolumn
\section{Pseudo-code for Computation Allocation Sub-routine}
\label{appA}

\begin{algorithm}[h!]
\caption{Computation Allocation}  \label{alg:CompAlloc}
\hspace*{\algorithmicindent} \textbf{Input:} dataset $\cD$, $n$ workers, straggler toleration $s$, computation matrix $\bB = [\bbb_1;\cdots;\bbb_n] \in \mathbb{R}^{n \times k}$ \\
\hspace*{\algorithmicindent} \textbf{Output:} data set allocation $\{\cD_{(1)},\cdots,\cD_{(n)}\}$ for $n$ workers
\begin{algorithmic}[1]
\Procedure{CompAlloc($\cD, \bB$)}{}
\State uniformly partition $\cD = \cup_{\kappa=1}^{k} \cD_\kappa$
\For{worker $i \gets 1$ to $n$}
\State $\cD_{(i)} \gets \cup_{\kappa=1}^{k} b_{i\kappa} \cD_\kappa$  \Comment{$\cD_{(i)}$ is assigned to worker $W_i$}
\EndFor
\EndProcedure
\end{algorithmic}
\end{algorithm}

\section{Pseudo-code for CodedReduce Scheme}
\label{appB}

\begin{algorithm}[h!]
\caption{CodedReduce}  \label{alg:CR}
\hspace*{\algorithmicindent} \textbf{Input:} dataset $\cD$, $(n,L)$--regular tree $T$, straggler toleration $s$ (per parent), model $\theta^{(t)}$ \\
\hspace*{\algorithmicindent} \textbf{Output:} gradient $\bg_{\cD} = \sum_{\bx \in \cD} \nabla \ell (\theta^{(t)}; \bx )$  aggregated at the master
\begin{algorithmic}[1]
\Procedure{CR.Allocate}{}
\State \textsf{GC} generates $\bB$ specified by $n,s$
\For{$l \gets 1$ to $L$}
\For{$i \gets 1$ to $n^{l-1}$}
\State $ \{ \cD^{T(l,n(i-1)+1)},\cdots,\cD^{T(l,ni)} \} = \textsc{CompAlloc}(\cD_{T(l-1,i)}, \bB)$
\EndFor
\For{$i \gets 1$ to $n^{l}$}
\State pick $r_{\textsf{CR}} \cdot d$ data points of $\cD^{T(l,i)}$ as $\cD(l,i)$
\State $ \cD_{T(l,i)} \gets \cD^{T(l,i)} \setminus \cD(l,i)$
\EndFor
\EndFor
\EndProcedure
\Procedure{CR.Execute}{}
\State \textsf{GC} generates $\bA$ from $\bB$
\State all the workers compute their local partial gradients $\bg_{\cD(l,i)}$
\For{$l \gets L-1$ to $1$}
\For{$i \gets 1$ to $n^{l}$}
\State worker nodes $(l,i)$:
\State \quad receives $[\bbm_{(l+1,n(i-1)+1)}; \cdots; \bbm_{(l+1,ni)}]$ from its children
\State \quad uploads $ \bbm_{(l,i)} = \ba_{f(l,i)}[ \bbm_{(l+1,n(i-1)+1)}; \cdots; \bbm_{(l+1,ni)}] + \bg_{\cD(l,i)}$ to its parent
\EndFor
\EndFor
\State master node:
\State \quad receives $[ \bbm_{(1,1)}; \cdots; \bbm_{(l,n)}]$ from its children
\State \quad recovers $\bg = \ba_{f(0,1)} [ \bbm_{(1,1)}; \cdots; \bbm_{(1,n)}]$
\EndProcedure
\end{algorithmic}
\end{algorithm}

\twocolumn
\section{Proof of Theorem \ref{thm:CRoptimality}}
\label{appC}
\textbf{Achievability:} According to the data allocation described in Algorithm \ref{alg:CR}, to be robust to any $s$ straggling children of the master, the data set $\cD$ is redundantly assigned to sub-trees $T(1,1),\cdots,T(1,n)$ such that each data point is placed in $s+1$ sub-trees, which yields
\begin{equation} \label{eq:rCR_proof1}
    |\cD^{T(1,i)}| = \left( \frac{s+1}{n} \right) d, \quad \forall  i \in [n].
\end{equation}
Then, nodes in layer $l=1$ pick $r_{\textsf{CR}}d$ data points as their corresponding data sets and similarly distribute the remaining among their children which together with (\ref{eq:rCR_proof1}) yields
\begin{align}
    |\cD^{T(2,i)}| &= \left( \frac{s+1}{n} \right) \left( \left( \frac{s+1}{n} \right)d -  r_{\textsf{CR}} d \right) \nonumber\\
    & = \left( \frac{s+1}{n} \right) \left( \left( \frac{s+1}{n} \right) - r_{\textsf{CR}} \right) d, \quad \forall  i \in [n^2]. \nonumber
\end{align}
By the same argument for each layer, we have
\begin{align} \label{eq:rCR_proof2}
    |\cD^{T(L,i)}| 
    &=
    \left( \frac{s+1}{n} \right) \Bigg( \left( \frac{s+1}{n} \right)^{L-1} - \left( \frac{s+1}{n} \right)^{L-2} r_{\textsf{CR}} \nonumber\\
    & \quad
    - \cdots - \left( \frac{s+1}{n} \right) r_{\textsf{CR}} - r_{\textsf{CR}} \Bigg)d, \quad \forall  i \in [n^L].
\end{align}
Putting (\ref{eq:rCR_proof2}) together with $|\cD^{T(L,i)}| = r_{\textsf{CR}} d$ yields
\begin{equation}
     r_{\textsf{CR}} = \frac{1}{ \left( \frac{n}{s+1} \right) + \cdots + \left( \frac{n}{s+1} \right)^L }. \nonumber
\end{equation}

\textbf{Optimality:} In an $\alpha$--resilient scheme, the master node is able to recover from any $s = \alpha n$ straggling sub-trees $T(1,1),\cdots,T(1,n)$. Therefore, each data point has to be placed in at least $s+1$ of such sub-trees, which yields
\begin{equation}\label{eq:rlower1}
    |\cD^{T(1,1)}| + \cdots + |\cD^{T(1,n)}| \geq (s+1)d,
\end{equation}
where the equality is achieved only if each data point is assigned to only $s+1$ sub-trees. Hence, we can assume the optimal scheme satisfies (\ref{eq:rlower1}) with equality. Moving to the second layer, the following claim bounds the required redundancy assigned to sub-trees $T(2,1),\cdots,T(2,n)$. Similar claim holds for any other group of the siblings in this layer.
\begin{claim} \label{claim1}
The following inequality holds:
$$|\cD^{T(2,1)}| + \cdots + |\cD^{T(2,n)}| \geq (s+1) \left( |\cD^{T(1,1)}| - rd \right).$$
\end{claim}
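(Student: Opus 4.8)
The plan is to prove Claim~\ref{claim1} by the same counting argument used in \eqref{eq:rlower1}, but now applied one layer down and conditioned on what the root $(1,1)$ keeps locally. The key observation is a recursive structure: after the root selects its local data $\cD(1,1)$ of size $rd$ out of $\cD^{T(1,1)}$, the \emph{remaining} data $\cD_{T(1,1)} = \cD^{T(1,1)}\setminus\cD(1,1)$ must be recoverable at node $(1,1)$ from the messages of any $n-s$ of its $n$ children. Indeed, for the overall scheme to be $\alpha$-resilient, node $(1,1)$ must be able to upload $\bg_{\cD^{T(1,1)}}$ to the master (it may be one of the $n-s$ survivors the master relies on), and since it already owns $\bg_{\cD(1,1)}$, it needs $\bg_{\cD_{T(1,1)}}$, which it can only obtain from its children in $T(1,1)$. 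Hence exactly the same combinatorial constraint as \eqref{eq:rlower1} applies with $\cD$ replaced by $\cD_{T(1,1)}$ and the $n$ sub-trees replaced by $T(2,1),\dots,T(2,n)$: every data point of $\cD_{T(1,1)}$ must sit in at least $s+1$ of the sub-trees $T(2,1),\dots,T(2,n)$, so
\begin{equation}
|\cD^{T(2,1)}| + \cdots + |\cD^{T(2,n)}| \geq (s+1)\,|\cD_{T(1,1)}| = (s+1)\bigl(|\cD^{T(1,1)}| - rd\bigr),\nonumber
\end{equation}
using $|\cD(1,1)| = rd$. This is exactly the claimed inequality.

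First I would make precise the reduction step: argue that recoverability of $\bg_{\cD}$ at the master from any $n-s$ children forces each child node $(1,i)$, whenever it is among the surviving set, to deliver the full $\bg_{\cD^{T(1,i)}}$ (not merely some coded partial gradient), because the master's decoding function must work for \emph{every} straggler pattern and the only partial-gradient information flowing into the master through node $(1,i)$ is the single message $\bbm_{(1,i)}$. Then I would argue that, for node $(1,i)$ to be able to form $\bg_{\cD^{T(1,i)}} = \bg_{\cD(1,i)} + \bg_{\cD_{T(1,i)}}$ for every pattern of $s$ stragglers among its own children, the sub-data $\cD_{T(1,i)}$ must be replicated across $\ge s+1$ of the child sub-trees $T(2,\cdot)$. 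Summing the sizes and substituting $|\cD(1,i)| = rd$ gives the claim.

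Second, I would note this is stated for the group $T(2,1),\dots,T(2,n)$ (children of $(1,1)$) but the identical argument, applied to node $(1,i)$ for any $i\in[n]$, bounds $\sum_{j}|\cD^{T(2,\cdot)}|$ over the children of $(1,i)$; iterating down the tree yields the chain of inequalities that, combined across all $L$ layers, forces $r \ge r_{\textsf{CR}}$ as in the achievability computation. For the purposes of the claim itself I only need the single-layer statement.

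The main obstacle I anticipate is the reduction step — justifying rigorously that an $\alpha$-resilient scheme on the tree \emph{must} have each node forward the exact gradient of its entire sub-tree's data set, rather than some cleverer coded combination that interleaves information across siblings at the master level. This needs a careful argument that the master's view of the data held below node $(1,i)$ is a single vector message $\bbm_{(1,i)}$, that the data sets $\cD^{T(1,i)}$ partition $\cD$ up to the chosen redundancy, and that a linear (or even general) decoder succeeding for all $\binom{n}{s}$ straggler patterns at the master forces, by a rank/dimension counting argument, the message $\bbm_{(1,i)}$ to span $\bg_{\cD^{T(1,i)}}$ — equivalently, that node $(1,i)$ cannot ``outsource'' any part of its assigned sub-tree's gradient to a sibling. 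Once that is pinned down, the counting bound \eqref{eq:rlower1} transfers verbatim and the rest is the bookkeeping already shown in the achievability part.
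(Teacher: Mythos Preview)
Your counting argument---every point of $\cD^{T(1,1)}\setminus\cD(1,1)$ must lie in at least $s+1$ of the child sub-trees $T(2,1),\dots,T(2,n)$---is exactly the inequality the paper proves. Where you diverge is in the justification. You propose to first establish that node $(1,1)$, whenever it survives, must be able to \emph{form} $\bg_{\cD^{T(1,1)}}$ from any $n-s$ children, and you flag this as the main obstacle. It is more than an obstacle: as stated it is stronger than what is needed and your suggested route (rank/dimension counting, ``cannot outsource to a sibling'') does not obviously work, precisely because the sets $\cD^{T(1,i)}$ overlap and a general scheme is free to have $(1,1)$ send an arbitrary function of its sub-tree's data rather than the exact sub-tree gradient.

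The paper sidesteps this entirely. Immediately before the claim it observes that, without loss of generality, the optimal scheme meets \eqref{eq:rlower1} with equality, so every data point lies in \emph{exactly} $s{+}1$ of the top-level sub-trees $T(1,\cdot)$. With that in hand, the proof is a direct two-layer adversarial construction: if some $\mathbf{x}\in\cD^{T(1,1)}\setminus\cD(1,1)$ sits in at most $s$ of the sub-trees $T(2,\cdot)$ under $(1,1)$, then simultaneously fail those $\le s$ children of $(1,1)$ \emph{and} the other $s$ top-level sub-trees containing $\mathbf{x}$. This is an admissible straggler pattern (at most $s$ per parent), and under it $\mathbf{x}$ reaches the master through no path at all, contradicting resiliency. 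No statement about what message $(1,1)$ ``must'' send is required. Your write-up would be fixed by dropping the reduction-to-exact-sub-tree-gradient step and invoking this cross-layer adversary directly; note in particular that the WLOG equality at layer~1 is what lets you kill all the alternate top-level copies of $\mathbf{x}$ with only $s$ failures.
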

\begin{proof}[Proof of Claim \ref{claim1}]
First, note that $|\cD^{T(1,1)} \setminus \cD{(1,1)}| \geq |\cD^{T(1,1)}| - rd$. If the claim does not hold, then there exists data point $\mathbf{x} \in \cD^{T(1,1)} \setminus \cD{(1,1)}$ such that $\mathbf{x}$ is placed in at most $s$ sub-trees rooting in the node $(1,1)$, e.g. $T(2,1),\cdots,T(2,s)$. Note that besides sub-tree $T(1,1)$, $\mathbf{x}$ is placed in only $s$ more sub-trees, e.g. $T(1,2),\cdots,T(1,s+1)$.  Now consider a straggling pattern where $T(1,2),\cdots,T(1,s+1)$ and $T(2,1),\cdots,T(2,s)$ fail to return their results. Therefore, $\mathbf{x}$ is missed at the master and fails the aggregation recovery.
\end{proof}

By the same logic used in the above proof, Claim \ref{claim1} holds for any parent node and its children, i.e. for any layer $l \in [L]$ and $i \in [n^{l-1}]$,
\begin{equation} \label{eg:generalbound}
    |\cD^{T(l,n(i-1)+1)}| + \cdots + |\cD^{T(l,ni)}| \geq (s+1) \left( |\cD^{T(l-1,i)}| - rd \right).
\end{equation}
Specifically applying (\ref{eg:generalbound}) to layer $L$ and noting that $|\cD^{T(L,i)}|=|\cD{(L,i)}|=rd$ for any $i$, we conclude that 
$$rd \left( \left( \frac{n}{s+1} \right)+1 \right) \geq |\cD^{T(L-1,1)}|.$$ 
We can then use the above inequality and furthermore write (\ref{eg:generalbound}) for layer $L-1$ which results in 
\begin{equation} 
    rd \left( \left( \frac{n}{s+1} \right)^2 + \left( \frac{n}{s+1} \right) +1 \right) \geq |\cD^{T(L-2,1)}|. \nonumber
\end{equation}
By deriving the above inequality recursively up to the master node, we get
\begin{equation} 
    rd \left( \left( \frac{n}{s+1} \right)^{L-1} + \cdots + \left( \frac{n}{s+1} \right) +1 \right) \geq \frac{s+1}{n}d, \nonumber
\end{equation}
which concludes the optimality in Theorem \ref{thm:CRoptimality}.
\section{Proof of Theorem \ref{thm:CRtime}}
\label{appD}
Let us begin with the lower bound 
\begin{align}
    \Expc \left[T_{\textsf{CR}} \right] &\geq \frac{r_{\textsf{CR}} d}{\mu} \log \left( \frac{1}{\alpha} \right) + a r_{\textsf{CR}} d \nonumber\\ 
    & + \left( n(1-\alpha)-o(n)+L-1 \right) \left( (1-o(1) \right) t_c + o(1).\nonumber
\end{align}
Consider the group of siblings\footnote{A group of siblings refers to $n$ nodes with the same parent.} placed in layer $L$ whose result reaches their parent nodes first. Let $\widehat{T}$ denote the time at which the parent of such group is able to recover the partial gradient from its fastest children's computations, i.e. fastest $n-s$ of them. We also denote by $T_1,\cdots,T_n$ the partial gradient computation times for the siblings. According to the random computation time model described in the paper and the computation load of \textsf{CR}, each $T_i$ is shifted exponential with the shift parameter $a d_i = a r_{\textsf{CR}} d $ and the rate parameter $\frac{\mu}{d_i} = \frac{\mu}{r_{\textsf{CR}} d}$. Since \textsf{CR} is robust to any $s$ stragglers per parent, the partial gradient computation time for any group of siblings is $T_{(n-s)}$, i.e. the $(n-s)$'th order statistics of $\{ T_1,\cdots,T_n \}$. In \cite{reisizadeh2017latency}, authors consider coded computation scenarios in a master-worker topology where the master only needs to wait for results of the first $\alpha$ fraction of the workers. However, as in the scenario here, the limited bandwidth at the master only allows for one transmission at the time. From the latency analysis in \cite{reisizadeh2017latency}, we have the following.
\begin{lemma}[Theorem 2, \cite{reisizadeh2017latency}] \label{lemma:regimeii}
With probability $1-o(1)$, we have 
\begin{align} \label{eq:regimeii}
    \widehat{T} \geq T_{(n-s)} + \left( n \left( 1-\alpha \right) - o(n) \right) t_c. 
\end{align}
\end{lemma}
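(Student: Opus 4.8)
\noindent\textbf{Proof plan for Lemma \ref{lemma:regimeii}.}
The plan is to decouple the single-port communication bottleneck at the parent from the workers' computation times, and then argue that under the shifted-exponential model the order statistics $T_{(1)},\dots,T_{(n-s)}$ are all clustered within $o(n)\,t_c$ of one another, so that the bottleneck term $(n-s)\,t_c$ shows up --- up to lower-order corrections --- additively on top of $T_{(n-s)}$. First I would record a purely deterministic lower bound coming only from the communication model: the parent collects $n-s$ messages, one at a time, each taking $t_c$, and a child's message cannot begin to be delivered before that child has finished computing, so the first of these deliveries cannot complete before $T_{(1)}+t_c$ and the remaining $n-s-1$ deliveries are serialized at spacing at least $t_c$. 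Hence
\[
  \widehat{T} \;\ge\; T_{(1)} + (n-s)\,t_c .
\]

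Next I would control the spread $T_{(n-s)}-T_{(1)}$ via the R\'enyi representation of exponential order statistics: with $E_1,\dots,E_n$ i.i.d.\ unit-rate exponentials, and each $T_i$ shifted-exponential with shift $a r_{\textsf{CR}} d$ and rate $\mu/(r_{\textsf{CR}} d)$,
\[
  T_{(k)} = a r_{\textsf{CR}} d + \frac{r_{\textsf{CR}} d}{\mu}\sum_{j=1}^{k}\frac{E_j}{\,n-j+1\,},
  \qquad
  T_{(n-s)}-T_{(1)} = \frac{r_{\textsf{CR}} d}{\mu}\sum_{j=2}^{n-s}\frac{E_j}{\,n-j+1\,} .
\]
The mean of the right-hand side equals $\tfrac{r_{\textsf{CR}} d}{\mu}\bigl(H_{n-1}-H_{s}\bigr)=\tfrac{r_{\textsf{CR}} d}{\mu}\log\tfrac1\alpha+o(1)$ (with $H_m$ the $m$-th harmonic number), which is $\Theta(1)$ because $d$, $L$, $\mu$, $t_c$ and $\alpha=s/n$ are held fixed (so $r_{\textsf{CR}} d=\Theta(1)$ and $\Theta(1)=o(n)\,t_c$). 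A Markov bound then yields $T_{(n-s)}-T_{(1)} = o(n)\,t_c$ with probability $1-o(1)$; and if this is needed simultaneously over all $n^{L-1}$ sibling groups of layer $L$ --- as the latency theorem ultimately does, since it applies the bound to whichever group reaches its parent first --- I would upgrade Markov to a Bernstein estimate for the independent sum $\sum_{j} E_j/(n-j+1)$, which is legitimate because every coefficient is $\Theta(1/n)$, making the variance $\Theta(1/n)$ and the upper tail sub-exponential in $n$.

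Finally I would combine the two displays: on the event (of probability $1-o(1)$) that $T_{(n-s)}-T_{(1)} = o(n)\,t_c$,
\[
  \widehat{T} \;\ge\; T_{(1)} + (n-s)\,t_c \;=\; T_{(n-s)} - \bigl(T_{(n-s)}-T_{(1)}\bigr) + (n-s)\,t_c \;\ge\; T_{(n-s)} + \bigl(n(1-\alpha)-o(n)\bigr)t_c ,
\]
using $n-s = n(1-\alpha)$ and absorbing the $o(n)\,t_c$ correction. I expect the only delicate point to be this concentration step: the bare Markov inequality suffices for the lemma in isolation, but the uniform-over-groups version needed downstream requires the sharper sub-exponential control of the order-statistic spread, so the real care lies in verifying the variance and maximal-coefficient bounds that feed Bernstein's inequality.
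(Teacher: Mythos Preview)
Your argument is sound. The deterministic queueing lower bound $\widehat{T}\ge T_{(1)}+(n-s)t_c$ is exactly the right starting point under the single-port model, and the R\'enyi representation together with a Markov (or Bernstein) tail bound on $T_{(n-s)}-T_{(1)}$, whose mean is $\tfrac{r_{\textsf{CR}}d}{\mu}(H_{n-1}-H_s)=\Theta(1)$, gives the required $o(n)\,t_c$ correction with probability $1-o(1)$. Your final rearrangement is clean and correct.

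The paper, however, does not prove this lemma at all: it is quoted verbatim as Theorem~2 of \cite{reisizadeh2017latency} and used as a black box in the lower-bound half of Theorem~\ref{thm:CRtime}. So rather than matching or diverging from the paper's proof, you have supplied a self-contained argument where the paper imports one. That is a genuine addition. Your anticipation of the uniform-over-groups issue is also apt: the paper applies the lemma to the \emph{first} group of siblings to reach its parent, which is a random group, so the single-group Markov bound is technically not quite enough without the sub-exponential upgrade you sketch (variance $\Theta(1/n)$, maximal coefficient $\Theta(1/n)$, hence Bernstein beats the $n^{L-1}$ union bound). The paper's own Lemma~\ref{lemma:orderstat} later proves exactly this kind of concentration for $T_{(n-s)}$ itself, so your Bernstein step is in the same spirit.
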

Now, conditioned on the event in (\ref{eq:regimeii}) we can write
\begin{align}
    \Expc \left[T_{\textsf{CR}} \right] 
    & \geq  \left( \Expc \left[T_{(n-s)} \right] + \left( n \left( 1-\alpha \right) - o(n) \right) t_c \right) \left( 1 - o(1) \right) \nonumber\\
    & \quad + \left( \Expc \left[T_{(n-s)} \right] + L t_c \right)  o(1)  \nonumber \\
    & \geq \Expc \left[T_{(n-s)} \right]  \nonumber \\
    & \quad + \left( n(1-\alpha)-o(n)+L-1 \right) \left( 1-o(1) \right) t_c \nonumber \\
    & \overset{(a)}\geq \frac{r_{\textsf{CR}} d}{\mu} \log \left( \frac{1}{\alpha} \right) + a r_{\textsf{CR}} d  \nonumber \\
    & \quad+ \left( n(1-\alpha)-o(n)+L-1 \right) \left( 1-o(1) \right) t_c + o(1), \nonumber
\end{align}
where inequality $(a)$ uses the fact that $\Expc \left[T_{(n-s)} \right] = \frac{r_{\textsf{CR}} d}{\mu} \left( H_n - H_s \right) + a r_{\textsf{CR}} d $ and  $ \log(i) < H_i = 1 + \frac{1}{2} + \cdots + \frac{1}{i} < \log(i+1)$ for any positive integer $i$.

To derive upper bound on $\Expc [T_{\textsf{CR}} ]$, that is
\begin{align}
    \Expc \left[T_{\textsf{CR}} \right] \leq 
    \frac{r_{\textsf{CR}} d}{\mu} \log \left( \frac{1}{\alpha} \right) + a r_{\textsf{CR}} d  +  n \left( 1-o(1) \right) L t_c + o(1),\nonumber
\end{align}
we prove the following concentration inequality on the computation time for any group of siblings.

\begin{lemma} \label{lemma:orderstat}
Let $T_1,\cdots,T_n$ denote i.i.d. exponential random variables with constant rate $\lambda = \Theta(1)$. For  $\varepsilon = \Theta \left( \frac{1}{n^{1/4}} \right)$ and constant $\alpha = \frac{s}{n}$, we have the following concentration bound for the order statistics $T_{(n-s)}$:
\begin{align}
    \Prob \left[T_{(n-s)} - \Expc \left[ T_{(n-s)}\right] \geq \varepsilon \right] \leq e^{- \Theta \left( \sqrt{n} \right)}.
\end{align}
\end{lemma}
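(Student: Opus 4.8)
The plan is to express the order statistic $T_{(n-s)}$ explicitly using Rényi's representation, which converts the order statistics of i.i.d. exponentials into a sum of independent (scaled) exponentials. Concretely, if $T_1,\ldots,T_n$ are i.i.d.\ exponential with rate $\lambda$, then $T_{(n-s)}$ has the same distribution as $\sum_{i=s+1}^{n} \frac{E_i}{\lambda\, i}$, where $E_{s+1},\ldots,E_n$ are i.i.d.\ standard exponentials (equivalently, $T_{(n-s)} \overset{d}{=} \frac{1}{\lambda}\sum_{i=s+1}^n \frac{E_i}{i}$, so that $\Expc[T_{(n-s)}] = \frac{1}{\lambda}(H_n - H_s)$). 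Thus $T_{(n-s)} - \Expc[T_{(n-s)}]$ is a centered sum of $n-s$ independent random variables $Y_i = \frac{1}{\lambda i}(E_i - 1)$, each of which is sub-exponential with a variance proxy and tail parameter both of order $1/i^2$ and $1/i$ respectively. Since $s = \alpha n$ with $\alpha = \Theta(1)$, the indices run over $i \in \{\alpha n + 1, \ldots, n\}$, so every term has scale $\Theta(1/n)$ and the number of terms is $\Theta(n)$.

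With this in hand, I would apply a Bernstein-type concentration inequality for sums of independent sub-exponential random variables. The relevant parameters are the sum of variances $v = \sum_{i=s+1}^n \operatorname{Var}(Y_i) = \frac{1}{\lambda^2}\sum_{i=\alpha n + 1}^{n} \frac{1}{i^2} = \Theta(1/n)$, and the uniform bound on the tail parameter $b = \max_{i > s} \frac{1}{\lambda i} = \Theta(1/n)$. Bernstein's inequality then gives, for the choice $\varepsilon = \Theta(n^{-1/4})$,
\begin{align}
\Prob\left[ T_{(n-s)} - \Expc[T_{(n-s)}] \geq \varepsilon \right] &\leq \exp\left( - \frac{\varepsilon^2/2}{v + b\varepsilon} \right). \nonumber
\end{align}
Since $v = \Theta(1/n)$, $b\varepsilon = \Theta(n^{-5/4})$ is dominated by $v$, and $\varepsilon^2 = \Theta(n^{-1/2})$, the exponent is $-\Theta\!\left( \frac{n^{-1/2}}{n^{-1}} \right) = -\Theta(\sqrt{n})$, which is exactly the claimed bound $e^{-\Theta(\sqrt n)}$.

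The main subtlety — and the step I would be most careful about — is verifying that Bernstein's inequality applies cleanly to this particular weighted sum and that the constants genuinely behave as claimed when $\alpha$ is a fixed constant bounded away from $0$ and $1$ (so that $\sum_{i=\alpha n+1}^n 1/i^2 \asymp 1/n$ and not something smaller or larger); the lower end of the summation range $i = \alpha n + 1$ is what keeps $b = \Theta(1/n)$ rather than $\Theta(1)$, which is essential. An equivalent and perhaps cleaner route is to compute the moment generating function of $T_{(n-s)}$ directly — it factors as $\prod_{i=s+1}^{n}\frac{i}{i - \theta/\lambda}$ — and optimize the Chernoff bound over $\theta$; this avoids invoking a black-box inequality and makes the $\Theta(\sqrt n)$ scaling transparent after a second-order Taylor expansion of $\log(1 - x)$, valid since each $\theta/(\lambda i)$ is small in the relevant range of $\theta \asymp \varepsilon/v \asymp n^{3/4}$. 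Either way, the one-sided (upper tail) nature of the claim is convenient: we only need control of the MGF for $\theta > 0$, where $\frac{i}{i-\theta/\lambda}$ stays well-defined as long as $\theta < \lambda(\alpha n + 1)$, which holds comfortably for $\theta = \Theta(n^{3/4})$.
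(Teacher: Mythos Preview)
Your proposal is correct and follows essentially the same route as the paper: write $T_{(n-s)}$ via R\'enyi's representation as a sum of independent scaled exponentials, observe that since $s=\alpha n$ with $\alpha$ bounded away from $0$ the variance sum and the maximal scale are both $\Theta(1/n)$, and apply Bernstein's inequality with $\varepsilon=\Theta(n^{-1/4})$ to obtain the exponent $-\Theta(\sqrt{n})$. The only cosmetic difference is that the paper indexes the spacings as $E_j\sim\exp$ with mean $\lambda/(n-j+1)$ for $j=1,\ldots,n-s$ and checks the Bernstein moment condition $\Expc[|E_j|^k]\le\tfrac12\Expc[E_j^2]B^{k-2}k!$ with $B=\lambda/s$ explicitly, whereas you reindex by $i=s+1,\ldots,n$ and invoke the sub-exponential form of Bernstein directly; these are the same computation.
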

\begin{proof}[Proof of Lemma \ref{lemma:orderstat}]
Given i.i.d. exponentials $T_1,\cdots,T_n \sim \exp(\lambda)$, we can write the successive differences of order statistics as independent exponentials. That is, we have
\begin{align}
    T_{(1)} &= E_1 \sim \exp \left( \frac{\lambda}{n} \right), \nonumber \\
    T_{(2)} - T_{(1)} &= E_2 \sim \exp \left( \frac{\lambda}{n-1} \right), \nonumber \\
    & \vdots  \nonumber \\
    T_{(n-s)} - T_{(n-s-1)} &= E_{n-s} \sim \exp \left( \frac{\lambda}{s+1} \right), \nonumber \\
    & \vdots \nonumber \\
    T_{(n)} - T_{(n-1)} &= E_n \sim \exp \left( \lambda \right),\nonumber 
\end{align}
where $E_i$'s are independent. Thus, $T_{(n-s)} = \sum_{i=1}^{n-s} E_i$. We have the following for independent exponentials $E_i$'s and $\lambda = \Theta(1)$:
\begin{align}
    \Expc \left[ |E_i|^k \right] &= \Expc \left[ E_i^k \right] \nonumber \\
    &= \left( \frac{\lambda}{n-i+1} \right)^k k! \nonumber \\
    &= \frac{1}{2} \Expc \left[ E_i^2 \right] \left( \frac{\lambda}{n-i+1} \right)^{k-2} k! \nonumber \\
    & \leq \frac{1}{2} \Expc \left[ E_i^2 \right] B^{k-2} k!, \nonumber 
\end{align}
for $B = \frac{\lambda}{s} = \frac{\lambda}{\alpha n} = \Theta \left( \frac{1}{n} \right)$. Moreover, 
\begin{align}
    \sum_{i=1}^{n-s} \Expc \left[ E_i^2 \right] &= 2 \lambda^2 \left( \frac{1}{n^2} + \cdots + \frac{1}{(s+1)^2} \right) \nonumber \\
    & \leq 2 \lambda^2 \cdot \frac{n-s}{s^2} \nonumber \\
    &= \frac{2 \lambda^2 (1-\alpha)}{\alpha^2} \cdot \frac{1}{n} \nonumber \\
    & = \Theta \left( \frac{1}{n} \right). \nonumber 
\end{align}
According to Bersterin's Lemma (See Lemma \ref{lemma:bernstein}), for $\varepsilon = \Theta \left( \frac{1}{n^{1/4}} \right)$ we have
\begin{align}
    &\Prob \left[T_{(n-s)} - \Expc \left[ T_{(n-s)}\right] \geq \varepsilon \right]  \nonumber \\
    & \quad \leq 
    \exp \left( - \frac{\varepsilon^2}{2 \left( \sum_{i=1}^{n-s} \Expc \left[ E_i^2 \right] + \varepsilon B \right)} \right) \nonumber \\
    &\quad \leq \exp \left( - \frac{\varepsilon^2}{2 \left( \Theta \left( \frac{1}{n} \right) + \varepsilon \Theta \left( \frac{1}{n} \right) \right)} \right) \nonumber \\
    & \quad = e^{ - \Theta \left( \sqrt{n} \right) }.\nonumber 
\end{align}
\end{proof}
As described in Section \ref{subsec:CRdescr}, in the proposed \textsf{CR} scheme all the worker nodes start their assigned partial gradient computations simultaneously; each parent waits for enough number of children to receive their results; combines with its partial computation and sends the result up to its parent. To upper bound the total aggregation time $T_{\textsf{CR}}$, one can separate all the local computations from the communications. Let $T_{\text{comp}}$ denote the time at which enough number of workers have executed their local gradient computations and no more local computation is needed for the final gradient recovery. Moreover, we assume that all the communications from children to parent are pipe-lined. Hence, we have $\Expc \left[ T_{\textsf{CR}} \right] \leq \Expc \left[ T_{\text{comp}} \right] + L (n-s) t_c$. To bound the computation time $T_{\text{comp}}$, consider the following event which keeps the local computation times for \emph{all} the $N/n$ groups of siblings concentrated below their average deviated by $\varepsilon = \Theta \left( \frac{1}{n^{1/4}} \right)$:
\begin{align}
    \mathcal{E}_1 \coloneqq \left\{ T^{gr}_{(n-s)} \leq \Expc \left[ T^{gr}_{(n-s)}\right] + \varepsilon \text{ for all the $N/n$ groups $gr$} \right\},\nonumber 
\end{align}
where a group $gr$ is a collection of $n$ children with the same parent, i.e. there are $N/n$ groups in the $(n,L)$--regular tree. For a group $gr$, $\{T^{gr}_{1}, \cdots, T^{gr}_{n}\}$ denote the random run-times of the nodes in the group and $T^{gr}_{(n-s)}$ represents its $(n-s)$'th order statistics.
Clearly,
\begin{align} \label{eq:bound0}
    \Expc \left[ T_{\text{comp}} | \mathcal{E}_1 \right] \leq \Expc \left[ T_{(n-s)}\right] + o(1).
\end{align}
Now let $\widetilde{T}$ denote the computation time corresponding to the slowest group of siblings, i.e.
\begin{align}
    \widetilde{T} \coloneqq \max_{\text{ over all $N/n$ groups $gr$}} T^{gr}_{(n-s)}.\nonumber 
\end{align}
Consider the following event:
\begin{align}
    \mathcal{E}_2 \coloneqq \left\{ \widetilde{T} > \Theta( \log n) \right\}.\nonumber 
\end{align}
We can write
\begin{align} \label{eq:bound1}
    \Expc \left[ T_{\text{comp}} | \mathcal{E}^c_1 \cap \mathcal{E}_2^c \right] \leq \Theta( \log n),
\end{align}
and
\begin{align} \label{eq:bound2}
    \Expc \left[ T_{\text{comp}} | \mathcal{E}^c_1 \cap \mathcal{E}_2 \right] & \Prob \left[\mathcal{E}_2 \right] \nonumber \\
    & \leq \Expc \left[ \widetilde{T} | \mathcal{E}^c_1 \cap \mathcal{E}_2 \right] \Prob \left[\mathcal{E}_2 \right] \nonumber \\
    & = \Expc \left[ \widetilde{T} | \widetilde{T} \leq \Theta( \log n) \right] \Prob \left[\widetilde{T} \leq \Theta( \log n) \right] \nonumber \\
    & \leq \Expc \left[ \widetilde{T} \right] \nonumber \\
    & \leq \Expc \left[ T_{\text{max}} \right] \nonumber \\
    & = \frac{r_{\textsf{CR}} d}{\mu} H_N + a r_{\textsf{CR}} d \nonumber \\
    & = \Theta \left( \log N \right) \nonumber \\
    & = L \Theta \left( \log n \right).
\end{align}
In the above derivation, $ T_{\text{max}}$ denotes the largest computation time over all the $N$ nodes. Putting (\ref{eq:bound1}) and (\ref{eq:bound2}) together, we can write 
\begin{align} \label{eq:bound4}
    \Expc \left[ T_{\text{comp}} | \mathcal{E}^c_1 \right]  &= \Expc \left[ T_{\text{comp}} | \mathcal{E}^c_1 \cap \mathcal{E}_2 \right] \Prob \left[\mathcal{E}_2 \right] \nonumber \\
    &\quad + \Expc \left[ T_{\text{comp}} | \mathcal{E}^c_1 \cap \mathcal{E}_2^c \right] \Prob \left[\mathcal{E}_2^c \right] \nonumber \\
    &\leq \Theta \left( \log n \right).
\end{align}
Moreover, using union bound on the $N/n$ groups of workers, we derive the following inequality.
\begin{align} \label{eq:bound3}
    \Prob \left[\mathcal{E}_1^c \right] & \leq \frac{N}{n} \Prob \left[T_{(n-s)} \geq \Expc \left[ T_{(n-s)}\right] + \varepsilon \right] \nonumber \\
    & \leq \Theta \left( n^{L-1} \right) e^{ - \Theta \left( \sqrt{n} \right) }.
\end{align}
Putting (\ref{eq:bound0}), (\ref{eq:bound4}) and (\ref{eq:bound3}) together, we have
\begin{align}
    \Expc \left[ T_{\text{comp}} \right] 
    &= \Expc \left[ T_{\text{comp}} | \mathcal{E}_1 \right] \Prob \left[\mathcal{E}_1 \right]
    + \Expc \left[ T_{\text{comp}} | \mathcal{E}_1^c \right] \Prob \left[\mathcal{E}_1^c \right] \nonumber \\
    & \leq \Expc \left[ T_{(n-s)}\right] + \varepsilon + \Theta \left( \log n \right) \Theta \left( n^{L-1} \right) e^{ - \Theta \left( \sqrt{n} \right) } \nonumber \\
    & = \Expc \left[ T_{(n-s)}\right] + o(1) \nonumber \\
    & = \frac{r_{\textsf{CR}} d}{\mu} \left( H_n - H_s \right) + a r_{\textsf{CR}} d + o(1).\nonumber 
\end{align}
Therefore,
\begin{align}
    \Expc \left[T_{\textsf{CR}} \right] &\leq \Expc \left[ T_{\text{comp}} \right] + L  n(1-\alpha) t_c \nonumber \\
    &= \frac{r_{\textsf{CR}} d}{\mu} \left( H_n - H_s \right) + a r_{\textsf{CR}} d + L n(1-\alpha) t_c + o(1) \nonumber \\
    & \leq 
    \frac{r_{\textsf{CR}} d}{\mu} \log \left( \frac{1}{\alpha} \right) + a r_{\textsf{CR}} d +  n \left( 1-o(1) \right) L t_c + o(1),\nonumber 
\end{align}
which completes the proof.

\begin{lemma}[Bernstein's Inequality] \label{lemma:bernstein}
Suppose $E_1,\cdots,E_m$ are independent random variables such that
\begin{align}
    \Expc \left[ |E_i|^k \right] \leq \frac{1}{2} \Expc \left[ E_i^2 \right] B^{k-2} k!,\nonumber 
\end{align}
for some $B > 0$ and every $i=1,\cdots,m$, $k \geq 2$. Then, for $\varepsilon > 0$,
\begin{align}
    & \Prob \left[ \sum_{i=1}^{m} E_i - \sum_{i=1}^{m} \Expc \left[ E_i \right] \geq \varepsilon \right] \nonumber \\ 
    & \quad \leq \exp \left( - \frac{\varepsilon^2}{2 \left( \sum_{i=1}^{m} \Expc \left[ E_i^2 \right] + \varepsilon B \right)} \right).\nonumber 
\end{align}
\end{lemma}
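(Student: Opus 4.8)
The plan is to prove this via the standard Chernoff (exponential Markov) argument combined with a per-variable bound on the moment generating function (MGF) of the centered summands, derived directly from the given Bernstein moment condition. Write $S = \sum_{i=1}^m E_i$ and $\mu = \sum_{i=1}^m \Expc[E_i]$, and let $V = \sum_{i=1}^m \Expc[E_i^2]$ denote the quantity appearing in the target bound. For any $t \ge 0$, Markov's inequality applied to the non-negative random variable $e^{t(S-\mu)}$ gives $\Prob[S - \mu \ge \varepsilon] \le e^{-t\varepsilon}\,\Expc[e^{t(S-\mu)}]$, and by independence the MGF factorizes as $\Expc[e^{t(S-\mu)}] = \prod_{i=1}^m \Expc[e^{t(E_i - \Expc[E_i])}]$. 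So the problem reduces to controlling each factor and then optimizing over the free parameter $t$.

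The key step is bounding a single centered MGF. First I would expand $\Expc[e^{tE_i}] = 1 + t\Expc[E_i] + \sum_{k \ge 2}\tfrac{t^k}{k!}\Expc[E_i^k]$ and insert the hypothesis $\Expc[E_i^k] \le \Expc[|E_i|^k] \le \tfrac12 \Expc[E_i^2] B^{k-2} k!$ into the tail of the series. The factorials cancel, leaving a geometric series: $\sum_{k\ge 2}\tfrac{t^k}{k!}\Expc[E_i^k] \le \tfrac12 \Expc[E_i^2] t^2 \sum_{j\ge 0}(tB)^j = \tfrac{\Expc[E_i^2] t^2/2}{1-tB}$, valid provided $0 \le tB < 1$ so that the geometric series converges. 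Using $1+x \le e^x$ I would then write $\Expc[e^{tE_i}] \le \exp\bigl(t\Expc[E_i] + \tfrac{\Expc[E_i^2] t^2/2}{1-tB}\bigr)$, and multiplying by $e^{-t\Expc[E_i]}$ the linear term cancels exactly, yielding the clean per-variable estimate $\Expc[e^{t(E_i-\Expc[E_i])}] \le \exp\bigl(\tfrac{\Expc[E_i^2] t^2/2}{1-tB}\bigr)$.

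Taking the product over $i$ then gives $\Expc[e^{t(S-\mu)}] \le \exp\bigl(\tfrac{t^2 V/2}{1-tB}\bigr)$, so that $\Prob[S-\mu \ge \varepsilon] \le \exp\bigl(-t\varepsilon + \tfrac{t^2 V/2}{1-tB}\bigr)$ for all $t \in [0, 1/B)$. To finish I would not minimize exactly but instead substitute the explicit near-optimal choice $t = \tfrac{\varepsilon}{V + B\varepsilon}$, which lies in $[0,1/B)$ since $V > 0$. A short computation gives $1 - tB = V/(V+B\varepsilon)$, $-t\varepsilon = -\varepsilon^2/(V+B\varepsilon)$, and $\tfrac{t^2 V/2}{1-tB} = \tfrac{\varepsilon^2/2}{V+B\varepsilon}$, so the exponent collapses to $-\tfrac{\varepsilon^2}{2(V + B\varepsilon)} = -\tfrac{\varepsilon^2}{2(\sum_i \Expc[E_i^2] + \varepsilon B)}$, which is exactly the claimed bound.

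I expect the only genuinely delicate point to be the per-variable MGF estimate: one must use the raw-moment form of the hypothesis correctly (bounding $\Expc[E_i^k]$ by $\Expc[|E_i|^k]$), keep $t$ in the admissible range $[0,1/B)$ to guarantee convergence of the geometric-series tail, and rely on the exact cancellation of the first-order term $t\Expc[E_i]$ so that centering does not degrade the bound. The final optimization over $t$ is then purely mechanical once the explicit substitution is made, and checking that this $t$ respects the constraint $tB<1$ is immediate from $V>0$.
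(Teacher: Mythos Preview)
Your argument is the standard Chernoff--MGF proof of Bernstein's inequality and is correct as written; the moment hypothesis is used exactly where it should be, the geometric-series convergence constraint $tB<1$ is tracked, and your explicit choice $t=\varepsilon/(V+B\varepsilon)$ yields the stated exponent after a clean computation. The paper itself does not prove this lemma at all: it simply records Bernstein's inequality as a known classical tool and invokes it in the proof of Theorem~\ref{thm:CRtime}, so there is no ``paper's own proof'' to compare against --- you have supplied a complete proof where the paper gives none.
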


\end{appendices}

\newpage


\end{document}